\theoremstyle{definition}
\newtheorem{theorem}{Theorem}
\newcommand{\R}{\mathbb{R}}
\newcommand{\X}{\mathcal{X}}
\newcommand{\bx}{\boldsymbol{x}}
\newcommand{\by}{\boldsymbol{y}}
\newcommand{\bz}{\boldsymbol{z}}
\newcommand{\softmin}{\mathit{softmin}}
\newcommand{\cS}{\mathcal{S}}
\newcommand{\cU}{\mathcal{U}}
\newcommand{\bomega}{\boldsymbol{\omega}}
\def\BibTeX{{\rm B\kern-.05em{\sc i\kern-.025em b}\kern-.08emT\kern-.1667em\lower.7ex\hbox{E}\kern-.125emX}}
\begin{document}
%
\title[Efficient Global String Kernel with Random Features]{Efficient Global String Kernel with Random Features: \\ Beyond Counting Substructures }
%


\author{Lingfei Wu}
\authornote{Corresponding author}
\affiliation{\institution{IBM Research}}
\email{wuli@us.ibm.com}

\author{Ian En-Hsu Yen}
\affiliation{\institution{Carnegie Mellon University}}
\email{eyan@cs.cmu.edu}

\author{Siyu Huo}
\affiliation{\institution{IBM Research}}
\email{siyu.huo@ibm.com}

\author{Liang Zhao}
\affiliation{\institution{George Mason University}}
\email{lzhao9@gmu.edu}

\author{Kun Xu}
\affiliation{\institution{IBM Research}}
\email{xukun@pku.edu.cn}

\author{Liang Ma}
\affiliation{\institution{IBM Research}}
\email{maliang@us.ibm.com}

\author{Shouling Ji}
\authornote{Shouling Ji is also with Alibaba-Zhejiang University Joint Research Institute of Frontier Technologies}
\affiliation{\institution{Zhejiang University}}
\email{sji@zju.edu.cn}

\author{Charu Aggarwal}
\affiliation{\institution{IBM Research}}
\email{charu@us.ibm.com}
%
\begin{abstract}
Analysis of large-scale sequential data has been one of the most crucial tasks in areas such as bioinformatics, text, and audio mining. Existing string kernels, however, either (i) rely on local features of short substructures in the string, which hardly capture long discriminative patterns, (ii) sum over too many substructures, such as all possible subsequences, which leads to diagonal dominance of the kernel matrix, or (iii) rely on non-positive-definite similarity measures derived from the edit distance. Furthermore, while there have been works addressing the computational challenge with respect to the length of string, most of them still experience quadratic complexity in terms of the number of training samples when used in a kernel-based classifier. In this paper, we present a new class of global string kernels that aims to (i) discover global properties hidden in the strings through global alignments, (ii) maintain positive-definiteness of the kernel, without introducing a diagonal dominant kernel matrix, and (iii) have a training cost linear with respect to not only the length of the string but also the number of training string samples. To this end, the proposed kernels are explicitly defined through a series of different random feature maps, each corresponding to a distribution of random strings. We show that kernels defined this way are always positive-definite, and exhibit computational benefits as they always produce \emph{Random String Embeddings (RSE)} that can be directly used in any linear classification models. Our extensive experiments on nine benchmark datasets corroborate that RSE achieves better or comparable accuracy in comparison to state-of-the-art baselines, especially with the strings of longer lengths. In addition, we empirically show that RSE scales linearly with the increase of the number and the length of string. 

\end{abstract}

%
%
\begin{CCSXML}
<ccs2012>
<concept>
<concept_id>10010147.10010257.10010293.10010075</concept_id>
<concept_desc>Computing methodologies~Kernel methods</concept_desc>
<concept_significance>500</concept_significance>
</concept>
</ccs2012>
\end{CCSXML}

\ccsdesc[500]{Computing methodologies~Kernel methods}

\keywords{String Kernel, String Embedding, Random Features}

%

\copyrightyear{2019} 
\acmYear{2019} 
\setcopyright{acmlicensed}
\acmConference[KDD '19]{The 25th ACM SIGKDD Conference on Knowledge Discovery and Data Mining}{August 4--8, 2019}{Anchorage, AK, USA}
\acmBooktitle{The 25th ACM SIGKDD Conference on Knowledge Discovery and Data Mining (KDD '19), August 4--8, 2019, Anchorage, AK, USA}
\acmPrice{15.00}
\acmDOI{10.1145/3292500.3330923}
\acmISBN{978-1-4503-6201-6/19/08}

\maketitle

\section{Introduction}

String classification is a core learning task and has drawn considerable interests in many applications such as computational biology \cite{leslie2001spectrum,kuksa2009scalable}, text categorization \cite{lodhi2002text,wu2018word}, and music classification \cite{farhan2017efficient}. One of the key challenges in string data lies in the fact that there is no explicit feature in sequences. A kernel function corresponding to a high dimensional feature space has been proven to be an effective method for sequence classification \cite{xing2010brief,leslie2004mismatch}. 

Over the last two decades, a number of string kernel methods \cite{cristianini2000introduction,leslie2001spectrum,smola2003fast,leslie2003mismatch,kuang2005profile,leslie2004mismatch} have been proposed, among which the $k$-spectrum kernel \cite{leslie2001spectrum}, $(k,m)$-mismatch kernel and its fruitful variants \cite{leslie2003mismatch,leslie2004fast,leslie2004mismatch} have gained much popularity due to its strong empirical performance. These kernels decompose the original strings into sub-structures, i.e., a short k-length subsequence as a $k$-mer, and then count the occurrences of $k$-mers (with up to $m$ mismatches) in the original sequence to define a feature map and its associated string kernels. However, these methods only consider the local properties of the short substructures in the strings, failing to capture the global properties highly related to some discriminative features of strings, i.e., relatively long subsequences. 

When considering larger $k$ and $m$, the size of the feature map grows exponentially, leading to serious diagonal dominance problem due to high-dimension sparse feature vector \cite{greene2006practical,weston2003dealing}. More importantly, the high computational cost for computing kernel matrix renders them only applicable to small values of $k$, $m$, and small data size. Recently, a thread of research has made the valid attempts to improve the computation for each entry of the kernel matrix \cite{kuksa2009scalable,farhan2017efficient}. However, these new techniques only solve the scalability issue in terms of the length of strings and the size of alphabet but not the kernel matrix construction that still has quadratic complexity in the number of strings. In addition, these approximation methods still inherit the issues of these "local" kernels, ignoring global structures of the strings, especially for these of long lengths. 

Another family of research \cite{waterman1991computer,watkins1999dynamic,gordon2003sequence,saigo2004protein,haasdonk2004learning,cortes2004rational,neuhaus2006edit} utilizes a distance function to compute the similarity between a pair of strings through the global or local alignment measure \cite{needleman1970general,smith1981comparison}. These string alignment kernels are defined resorting to the learning methodology of R-convolution \cite{haussler1999convolution}, which is a framework for computing the kernels between discrete objects. The key idea is to recursively decompose structured objects into sub-structures and compute their global/local alignments to derive a feature map. However, the common issue that these string alignment kernels have to address is how to preserve the property of being a valid \emph{positive-definite} (p.d.) kernel \cite{scholkopf2004kernel}. Interestingly, both approaches \cite{gordon2003sequence,saigo2004protein} proposed to sum up all possible alignments to yield a p.d. kernel, which unfortunately suffers the \emph{diagonal dominance} problem, leading to bad generealization capability. Therefore, some treatments have to be made in order to repair the issues, e.g. taking the logarithm of the diagonal, which in turns breaks the positive definiteness. Another important limitation of these approaches is their high computation costs, with the quadratic complexity in terms of both the number and the length of strings.  

In this paper, we present a new family of string kernels that aims to: (i) discover global properties hidden in the strings through global alignments, (ii) maintain positive-definiteness of the kernel, without introducing a diagonal dominant kernel matrix, and (iii) have a training cost linear with respect to not only the length of the string but also the number of training string samples.

To this end, our proposed global string kernels 
take into account the global properties of strings through the global-alignment based edit distance such as Levenshtein distance \cite{yujian2007normalized}.
In addition, the proposed kernels are explicitly defined through feature embedding given by a distribution of random strings. The resulting kernel is not only a truly p.d. string kernel without suffering from diagonal dominance but also naturally produces \emph{Random String Embeddings (RSE)}  by utilizing Random Features (RF) approximations. We further design four different sampling strategies to generate an expressive RSE, which is the key leading to state-of-the-art performance in string classification. Owing to the short length of random strings, we reduce the computational complexity of RSE \emph{from quadratic to linear} both in the number of strings and the length of string. We also show the uniform convergence of RSE to a p.d. kernel that is not shift-invariant for string of bounded length by non-trivially extending conventional RF analysis \cite{rahimi2008random}. 

Our extensive experiments on nine benchmark datasets corroborate that RSE achieves better or comparable accuracy in comparison to state-of-the-art baselines, especially with the strings of longer lengths. In addition, RSE scales linearly with the increase of the number and the length of strings. Our code is available at {\small \url{https://github.com/IBM/RandomStringEmbeddings}}.

\section{Existing String Kernels and Conventional Random Features}
In this section, we first introduce existing string kernels and its several important issues that impair their effectiveness and efficiency. We next discuss the conventional Random Features for scaling up large-scale kernel machines and further illustrate several challenges why the conventional Random Features cannot be directly applied to existing string kernels. 

\subsection{Existing String Kernels}
We discuss existing approaches of defining string kernels and also three issues that have been haunting existing string kernels for a long time: (i) diagonal dominance; (ii) non-positive definite; (iii) scalability issue for large-scale string kernels


\subsubsection{String Kernel by Counting Substructures}
\hfill\\
We consider a family of string kernels most commonly used in the literature, where the kernel $k(\bx,\by)$ between two strings $\bx,\by\in \X$ is computed by counting the number of shared substructures between $\bx$, $\by$. Let $S$ denote the set of indices of a particular substructure in $\bx$ (e.g. subsequence, substring, or single character), and  $\cS(\bx)$ be the set of all possible such set of indices. Furthermore, let $\cU$ be all possible values of such substructure. Then a family of string kernels can be defined as
\begin{equation}\label{kernel_count}
k(\bx,\by):=\sum_{u\in \mathcal{U}}\phi_u(\bx)\phi_u(\by), \text{where}\; \phi_u(\bx)=\sum_{S\in\cS} 1_{u}(\bx[S]) \gamma(S)
\end{equation}
and $1_{u}(\bx[S])$ is the number of substructures in $\bx$ of value $u$, weighted by $\gamma(S)$, which reduces the count according to the properties of $S$, such as length. For example, in a vanilla text kernel, $\cS$ denotes word positions in a document $\bx$ and $\cU$ denotes the vocabulary set (with $\gamma(S)=1$). To take string structure into consideration, the \emph{gappy n-gram} \cite{lodhi2002text} considers $\cS(\bx)$ as the set of all possible subsequences in a string $\bx$ of length $k$, with $\gamma(S)=\exp(-\ell(S))$ being a weight exponentially decayed function in the length of $S$ to penalize subsequences of large number of insertions and deletions. While the number of possible subsequences in a string is exponential in the string length, there exist dynamic-programming-based algorithms that could compute the kernel in Equation \eqref{kernel_count} in time $O(k|\bx||\by|)$ \cite{lodhi2002text}. Similarly, or more complex, substructures were employed in the convolution kernels \cite{haussler1999convolution} and \cite{watkins1999dynamic}. Both of them have quadratic complexity w.r.t. the string length, which is too expensive for problems of long strings. 

To circumvent this issue, Leslie et al. proposed the \emph{$k$-spectrum kernel} (or gap-free $k$-gram kernel) \cite{leslie2001spectrum}, which only requires a computation time $O(k(|\bx|+|\by|))$ linear to the string length, by taking $\cS$ as all \emph{substrings} (without gap) of length $k$, where they could be even further improved to $O(|\bx|+|\by|)$ \cite{smola2003fast}. While this significantly increases computational efficiency, the \emph{no gap} assumption is too strong in practice. Therefore, the $(k,m)$-mismatch kernel \cite{leslie2003mismatch,leslie2004mismatch} is more widely used, which considers $1_u(\bx[S])=1$ not only when the $k$-mer $\bx[S]$ exactly matches $u$ but also when they mismatch by no more than $m$ characters. The algorithm has a computational burden of $O(k^{m+1}|\Sigma|^m(|\bx|+|\by|))$ and a number of more recent works improved it to $O(m^3+2^k(|\bx|+|\by|))$ in the exact case and even faster in the approximate case \cite{kuksa2009scalable,farhan2017efficient}.

One significant issue regarding substructure-counting kernel is the \emph{diagonally dominant} problem, where the diagonal elements of a kernel Gram matrix is significantly (often orders-of-magnitude) larger than the off-diagonal elements, yielding an almost identity kernel matrix that Support Vector Machine (SVM) does not perform well on \cite{weston2003dealing,greene2006practical}. This is because a string always shares a large number of common substructures with itself, and the issue is more serious for the problems summing over more substructures in $\cS$. 

\subsubsection{Edit-Distance Substitution Kernel}
\hfill\\
Another commonly used approach is to define string kernels by exploiting the edit distance (e.g. Levenshtein distance). With a slight abuse of notation, let $d(i,j)$ denote the Levenshtein distance (LD) between two substrings $d(\bx[1:i],\by[1:j])$. The distance can be recursively defined as follows.
\begin{equation}\label{edit_dist}
d(i,j) = \left\{\begin{array}{ll}
\max\{i,j\}   , \ \ \ i=0 \;\text{or}\; j=0\\
\min\left\{\begin{array}{l}
d(i-1,j)+1,\\
d(i,j-1)+1,\\
d(i-1,j-1)+1_{\bx[i]\neq \by[j]}
\end{array}\right\},   & \ o.w.
\end{array}\right.
\end{equation}
Essentially, the distance \eqref{edit_dist} finds the minimum number of edits (i.e. insertion, deletion, and substitution) required to transform $\bx$ into $\by$. The distance measure is known as a \emph{metric}, that is, it satisfies (i) $d(\bx,\by)\geq 0$, (ii) $d(\bx,\by)=d(\by,\bx)$, (iii) $d(\bx,\by)=0$ $\iff$ $\bx=\by$ and (iv) $d(\bx,\by)+d(\by,\bz)\geq d(\bx,\bz)$. 

Then the \emph{distance-substitution kernel} \cite{haasdonk2004learning} replaces the \emph{Euclidean distance} in a typical kernel function by a new distance $d(\bx,\by)$. For example, for \emph{Gaussian} and \emph{Laplacian} RBF kernels, the distance substitution leads to
\begin{align}
& k_{Gauss}(\bx,\by):=\exp(-\gamma d(\bx,\by)^2) \label{GRBF} \\
& k_{Lap}(\bx,\by):=\exp(-\gamma d(\bx,\by)).
\label{LRBF}
\end{align}
The kernels, however, are not \emph{positive-definite} for the case of edit distance \cite{neuhaus2006edit}. This implies that the use of string kernels \eqref{GRBF}, \eqref{LRBF} in a kernel method, such as SVM, does not correspond to a loss minimization problem, and the numerical procedure can not guarantee convergence to an optimal solution since the non-p.d. kernel matrix yields a non-convex optimization problem. Despite being invalid, this type of kernels is still being used in practice \cite{neuhaus2006edit,loosli2016learning}. 

\subsection{Conventional Random Features for Scaling Up Kernel Machine}

As we discussed in the previous sections, while there have been works addressing the computational challenge with respect to the length of string or the size of the alphabet, all of exiting string kernels still have quadratic complexity in terms of the number of strings when computing the kernel matrix for string classification. 

Independently, over the last decade, there has been growing interests in the development of various low-rank kernel approximation techniques for scaling up large-scale kernel machines such as Nystrom method \cite{williams2001using}, Random Features method \cite{rahimi2008random}, and other hybrid kernel approximation methods \cite{si2017memory}. Among them, RF method has attracted considerable interests due to easy implementation and fast execution time \cite{rahimi2008random,wu2016revisiting,wu2018scalable}, and has been widely applied to various applications such as speech recognition and computer vision \cite{huang2014kernel,chen2016efficient}. In particular, unlike other approaches that approximates kernel matrix, RF method approximates the kernel function directly via sampling from an explicit feature map. Therefore, these random features, combined with very simple linear learning techniques, can effectively reduce the computational complexity of the exact kernel matrix from quadratic to linear in terms of the number of training samples. 

Despite the great success the conventional RF method has achieved, there are three key challenges in applying this technique to existing string kernels introduced in the previous section. First, the conventional RF methods are designed for the kernel machines that only take the fix-length vectors. Thus, it is not clear how to extend this technique to the string kernels that take variable-length strings. Second, all conventional RF methods require a user-defined kernel as inputs and then derive the corresponding random feature map. For given kernel functions like Gaussian or Laplacian RBF kernels, it might be easy to derive random feature maps, i.e. Gaussian distribution and Gamma distribution. However, it is highly non-trivial how to derive a random feature map for a string kernel defined as in Equations \eqref{kernel_count}, \eqref{GRBF}, and \eqref{LRBF}. Finally, the theoretical foundation to guarantee the inner product of two transformed points approximating the exact kernel is that the kernel must be \emph{shift-invariant} and \emph{positive-definite}. This assumption about the kernel is hard to hold for  most of string kernels since existing string kernels are not a shift-invariant kernel \cite{wu2018d2ke}. 

In this work, instead of using Random Features to approximate a pre-defined kernel function, we overcome all these aforementioned issues by generalizing Random Features to develop a new family of efficient and effective string kernels that not only are \emph{positive-definite} but also reduce the computational complexity from quadratic to linear in both the number and the length of strings. Note that, our approach is different from a recent work \cite{wu2018d2ke} on distance kernel learning that mainly focuses on theoretical analysis of these kernels on structured data like time-series \cite{wu2018random} and text \cite{wu2018word}. Instead, we focus on developing empirical methods that could often outperform or are highly competitive to other state-of-the-art approaches, including \emph{kernel} based and \emph{Recurrent Neural Networks} based methods, as we will show in our experiments.


\section{From Edit Distance to String Kernel}
\label{sec:global string kernels}

In this section, we first introduce a family of string kernels that utilize the global alignment measure, i.e. Edit Distance (or Levenshtein distance), to construct a kernel while establishing its \emph{positive definiteness}. Then we further discuss how to perform efficient computation of the proposed string kernels by generating the kernel approximation through Random Features that we refer as \emph{Random String Embeddings}. Finally, we show the uniform convergence of RSE to a p.d. kernel that is not shift-invariant. 

\subsection{Global String Kernel}
Suppose we are interested in strings of bounded length $L$, that is, $\X\in\Sigma^L$. Let $\Omega\in\Sigma^L$ also be a domain of strings and $p(\bomega):\Omega\rightarrow \R$ be a probability distribution over a collection of \emph{random strings} $\bomega\in\Omega$. The proposed kernel is defined as
\begin{equation} \label{eq:stringkernel}
\begin{aligned}
&k(\bx,\by):=\int_{\bomega\in\Omega} p(\bomega) \phi_{\bomega}(\bx)\phi_{\bomega}(\by) d\bomega,
\end{aligned}
\end{equation}
where $\phi_{\bomega}(\bx)$ could be set directly to the distance
\begin{equation} \label{eq:dist2fea}
\phi_{\bomega}(\bx):=d(\bx,\bomega)
\end{equation}
or be converted into a similarity measure via the transformation
\begin{equation} \label{eq:softdist2fea}
\phi_{\bomega}(\bx):=\exp(-\gamma d(\bx,\bomega)).
\end{equation}
In the former case, it could be illustrated as some form of the distance substitution kernel but using a distribution of random strings instead of the original strings. In the latter case, it could be interpreted as a \emph{soft distance substitution} kernel. Instead of substituting \emph{distance} into the function like \eqref{LRBF}, it substitutes a soft version of the form
\begin{equation}\label{kernel2}
k(\bx,\by)=\exp\left(-\gamma\softmin_{p(\bomega)}\{ d(\bx,\bomega)+d(\bomega,\by)  \} \right)
\end{equation}
where
$$
\softmin_{p(\bomega)}(f(\bomega)):=-\frac{1}{\gamma}\log\int p(\bomega) e^{-\gamma f(\bomega)} d\bomega.
$$
Suppose $\Omega$ only contains strings of non-zero probability (i.e. $p(\bomega)>0$). Comparing \eqref{kernel2} to the distance-substitution kernel \eqref{LRBF}, we notice that 
$$
\softmin_{p(\bomega)}(f(\bomega))\rightarrow \min_{\bomega\in\Omega}\; f(\bomega) 
$$
as $\gamma\rightarrow \infty$. As long as $\X\subseteq \Omega$ and the global alignment measure (i.e. Levenshtein distance) satisfies the triangular inequality \cite{levenshtein1966binary}, then we have 
$$
\min_{\bomega\in \Omega} d(\bx,\bomega)+d(\by,\bomega)=d(\bx,\by),
$$
and therefore,
$$
k(\bx,\by)\rightarrow \exp(-\gamma d(\bx,\by) )
$$
as $\gamma\rightarrow \infty$, which relates our kernel \eqref{kernel2} to the distance-substitution kernel \eqref{LRBF} in the limiting case. However, note that our kernel \eqref{kernel2} is always positive definite by its definition \eqref{eq:stringkernel} since
\begin{equation} \label{eq:stringkernel_pd}
\begin{split}
&\int_{\bx}\int_{\by}\int_{\bomega\in\Omega} p(\bomega) \phi_{\bomega}(\bx)\phi_{\bomega}(\by) d\bomega \bx\by\\
&=\int_{\bomega\in\Omega}p(\bomega)\left(\int_{\bx}\phi_{\bomega}(\bx)d\bx\right)\left( \int_{\by}\phi_{\bomega}(\by)d\by \right) d\bomega \geq 0
\end{split}
\end{equation}

\subsection{Random String Embedding}
\textbf{Efficient Computation of RSE.}
Although the kernels \eqref{eq:dist2fea} and \eqref{eq:softdist2fea} are clearly defined and easy to understand, it is hard to derive a simple analytic form of solution. Fortunately, we can easily utilize the RF approximations for the exact kernel,
\begin{equation} \label{eq:kernel_RF}
    \hat k_R(\bx,\by) \ \approx \ \ \big \langle Z(\bx), Z(\by) \big \rangle = \frac{1}{R} \sum_{i=1}^R \big \langle \phi_{\bomega_i}(\bx), \phi_{\bomega_i}(\by) \big \rangle.
\end{equation}
The feature vector $Z(\bx)$ is computed using dissimilarity measure $\phi({\{\bomega_i\}}_{i=1}^R, x)$, where ${\{\bomega_i\}}_{i=1}^R$ is a set of random strings of variable length $D$ drawn from a distribution $p(\bomega)$. In particular, the function $\phi$ could be any edit distance measure or converted similarity measure that consider global properties through alignments. Without loss of generality we consider Levenshtein distance (LD) as our distance measure, which has been shown to be a true distance metric \cite{yujian2007normalized}. We call our random approximation \emph{Random String Embedding (RSE)}, which we will show its uniform convergence to the exact kernel over all pairs of strings by non-trivially extending the conventional RF analysis in \cite{rahimi2008random} to the kernel that is not shift-invariant and the inputs that are not fixed-length vectors. It is worth noting that only feature matrix $Z$ is actually computed for string classification tasks and there is no need to compute $\hat k_R(\bx,\by)$. 

\begin{algorithm}[tbhp]
\caption{Random String Embedding: An Unsupervised Feature Representation Learning for Strings}
\begin{algorithmic}[1]
    \STATEx {\bf Input:} Strings $\{x_i\}_{i=1}^N, 1 \leq |x_i| \leq L$, maximum length of random strings $D_{max}$, string embedding size $R$.
    \STATEx {\bf Output:} Feature matrix $Z_{N \times R}$ for input strings
    \FOR {$j = 1, \ldots, R$}
        \STATE Draw $D_j$ uniformly from $[1, D_{max}]$. 
        \STATE Generate random strings $\omega_j$ of length $D_j$ from Algorithm \ref{alg:randStr_gen}.
        \STATE Compute a feature vector $Z(:,j) = \phi_{\omega_i}(\{x_i\}_{i=1}^N)$ using LD in \eqref{eq:dist2fea} or soft-version LD in \eqref{eq:softdist2fea}. 
    \ENDFOR
    \STATE Return feature matrix $Z(\{x_i\}_{i=1}^N) = \frac{1}{\sqrt{R}} [Z(:,1:R)]$
\end{algorithmic}
\label{alg:RSE_features}
\vspace{-2mm}
\end{algorithm}

As shown in Algorithm \ref{alg:RSE_features}, our \emph{Random String Embedding} is very simple and can be easily implemented. There are several remarks worth noting here. First, RSE is an unsupervised feature generation method for embedding strings, making it highly flexible to be combined with various learning tasks beside classification. The hyperparamter $D_{max}$ is for both the kernel \eqref{eq:dist2fea} and the kernel \eqref{eq:softdist2fea}, and the hyperparameter $\gamma$ is only for the kernel \eqref{eq:softdist2fea} using soft-version LD distance as features. One interesting way to illustrate the role of $D$ in lines 2 and 3 of Alg. \ref{alg:RSE_features} is to capture the longest segments of the original strings that correspond to the highly discriminative features hidden in the data. We have observed in our experiments that these long segments are particularly important for capturing the global properties of the strings of long length ($L>1000$). In practice, we have no prior knowledge about the value of $D$ and thus we sample each random string of $D$ in the range $[1, \ D_{max}]$ to yield unbiased estimation. In practice, $D$ is often a constant, typically smaller than 30. Finally, in order to learn an expressive representation, generating a set of random strings of high-quality is a necessity, which we defer to discuss in detail later. 

One important aspect about our RSE embedding method stems from the fact that it scales linearly both in the number of strings and in the length of strings. Notice that a typical evaluation of LD between two data strings is $O(L^2)$ given that two strings have roughly equal length $L$. With our RSE, we can reduce the computational cost of LD to $O(LD)$, where $D$ is treated as a constant in Algorithm \ref{alg:RSE_features}. This is particular important when the length of the original strings are very long. In addition, most of popular existing string kernels have quadratic complexity $O(N^2)$ in computing kernel matrix in terms of the number of strings, rendering the serious difficulty to scale to large data. In contrast, our RSE reduces this computational complexity from quadratic to linear, owing to generating an embedding matrix with $O(NR)$ instead of constructing a full kernel matrix directly. Recall that the state-of-the-art string kernels have complexity of $O(N^2(m^3+2^kL))$ \cite{farhan2017efficient,kuksa2009scalable}. Therefore, with our RSE method we have significantly improved the total complexity of $O(NRL)$, if we treat $D$ as a constant, which is independent of the size of alphabet $k$ and the number of mismatched characters $m$. We demonstrate the linear scalability of RSE respecting to the number of strings and the length of strings, making it a strong candidate for the method of the choice for string kernels on large data.

\begin{algorithm}[tbp]
\caption{Sampling Strategies for Generating Random Strings}
\label{alg:randStr_gen}
\begin{algorithmic}[1]
    \STATEx {\bf Input:} Strings $\{x_i\}_{i=1}^N$, length of random string $D_j$, size of alphabet $|\Sigma|$.
    \STATEx {\bf Output:} Random strings $\omega_i$
    \IF{Choose RSE(RF)} 
        \STATE Uniformly draw number $D_j$ of indices $\{I_1, I_2, \ldots, I_{D_j}\} = \text{randi}(|\Sigma|, 1, D_j)$
        \STATE Obtain random characters from $\Sigma (\{I_1, I_2, \ldots, I_{D_j}\})$
        \STATE Generate random string $\omega_i$ by concatenating random characters
    \ELSIF{Choose RSE(RFD)}
        \STATE compute the discrete distribution $h(\omega)$ for each character in alphabet $\Sigma$
        \STATE Draw number $D_j$ of indices $\{I_1, I_2, \ldots, I_{D_j}\} = \text{randi}(|\Sigma|, 1, D_j)$ from data letter distribution $h(\omega)$
        \STATE Obtain random characters from $\Sigma (\{I_1, I_2, \ldots, I_{D_j}\})$
        \STATE Generate random string $\omega_i$ by concatenating random characters
    \ELSIF{Choose RSE(SS)}
        \STATE Uniformly draw string index $k = \text{randi}(1, N)$ and select the $k$-th raw string 
        \STATE Obtain length $L_k$ of the $k$-th raw string and uniformly draw letter index $l = \text{randi}(1, L_k - D_j +1)$
        \STATE Generate random string $\omega_i$ from a continuous segment of $k$-th raw string starting from $l$-th letter 
    \ELSIF{Choose RSE(BSS)}
        \STATE Uniformly draw string index $k = \text{randi}(1, N)$, select the $k$-th raw string, and obtain its length  $L_k$
        \STATE Divide $k$-th raw string into $b = L_k/D_j$ blocks of sub-string
        \STATE Uniformly draw number of blocks that will be sampled $l = randi(1,b)$
        \STATE Uniformly draw block indices $\{B_1, B_2, \ldots, B_{l}\} = \text{randi}(b, 1, l)$
        \STATE Generate number $l$ of random strings $\omega_i$ by gathering all drawn blocks of sub-strings (and remove if it has in $\{\omega_i\}$)
    \ENDIF
    \STATE Return $\omega_i$ for all generated random strings
\end{algorithmic}
\vspace{-2mm}
\end{algorithm}

\textbf{Effective Random Strings Generation.} The key to the effectiveness of the RSE is how to generate a set of random strings of high quality. We present four different sampling strategies to produce a rich feature space derived from both data-independent and data-dependent distributions. We summarize various sampling strategies for generating random strings in Algorithm \ref{alg:randStr_gen}. 

The first sampling strategy follows the traditional RF method, where we find the distribution associated to the predefined kernel function. However, since we define the kernel function by an explicit distribution, we have flexibility to seek any existing distribution that may apply well on the data. To this end, we use uniform distribution to represent the true distribution of the characters in given specific alphabet. We call this sampling scheme RSE(RF).
The second sampling strategy is a similar scheme but instead of using existing distribution we compute histograms of each character in the alphabet that appears in the data strings. The learned histogram is an biased estimate for the true probability distribution. We call this sampling scheme RSE(RFD). 

The previous two sampling strategies basically consider how to generate a random string from low-level characters. Recent studies \cite{ionescu2017large,rudi2017generalization} on random features have shown that a data-dependent distribution may yield better generalization error. Therefore, inspired by these findings, we also design two data-dependent sampling schemes to generate random strings. We do not use well-known representative set of method to pick the whole strings since it has been shown in \cite{chen2009similarity} that this method generally leads larger generalization errors. A simple yet intuitive way to obtain random strings is to sample a segment (sub-string) of variable length from the original strings. Too long or too short sub-strings could either carry noises or insufficient information about the true data distribution. Therefore, we uniformly sample the length of random strings as before. We call this sampling scheme RSE(SS). In order to sample more random strings in one sampling period, we also divide the original string into several blocks of sub-strings and uniformly sample some number of these blocks as our random strings. Note that in this case it means that we sample multiple random strings and we do not concatenate them as one long string. This scheme leads to learn more discriminative features at the cost of more computations for running Alg. \ref{alg:randStr_gen} once. We call this scheme RSE(BSS).

\subsection{Convergence Analysis}

As our kernel \eqref{eq:stringkernel} does not have an analytic form but only a sampling approximation \eqref{eq:kernel_RF}, it is crucial to ask: how many random features are required in \eqref{eq:kernel_RF} to have an accurate approximation? Does such accuracy generalize to strings beyond training data? To answer those questions, we non-trivially extending the conventional RF analysis in \cite{rahimi2008random} to the proposed string kernels in Equation \eqref{eq:stringkernel}, which  are not shift-invariant and take the variable-length strings. We provide the following theorem to show the uniform convergence of RSE to a p.d. string kernel over all pairs of strings. 

\begin{theorem}\label{thm:convergence}
Let $\Delta_R(\bx,\by):=\hat k_R(\bx,\by)-k(\bx,\by)$ be the difference between the exact kernel \eqref{eq:stringkernel} and its random-feature approximation \eqref{eq:kernel_RF} with $R$ samples, we have the following uniform convergence:
\begin{equation*}\label{converge_result}
P\left\{ \max_{\bx,\by\in\X} |\Delta_R(\bx,\by)| > t\right\} \leq 8e^{2L\log|\Sigma|-Rt^2/2}.
\end{equation*}
where $L$ is a bound on the length of strings in $\X$ and $|\Sigma|$ is size of the alphabet. In other words, to guarantee $|\Delta_R(x,y)|\leq \epsilon$ with probability at least $1-\delta$, it suffices to have
$$
R = \Omega\biggl(\frac{L\log|\Sigma|}{\epsilon^2}\log(\frac{\gamma}{\epsilon})+\frac{1}{\epsilon^2}\log(\frac{1}{\delta}) \biggr).
$$
\end{theorem}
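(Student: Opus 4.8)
The plan is to read $\Delta_R(\bx,\by)$ as the deviation of an empirical average from its expectation and then pass from pointwise to uniform control. Fix a pair $(\bx,\by)$ and set $Z_i := \phi_{\bomega_i}(\bx)\,\phi_{\bomega_i}(\by)$ for the i.i.d. random strings $\bomega_1,\dots,\bomega_R\sim p(\bomega)$. By the definition of the kernel in \eqref{eq:stringkernel} these satisfy $\mathbb{E}[Z_i]=k(\bx,\by)$, so that $\hat{k}_R(\bx,\by)=\frac1R\sum_i Z_i$ and $\Delta_R(\bx,\by)=\frac1R\sum_i\big(Z_i-\mathbb{E}[Z_i]\big)$ is a centered average of i.i.d. bounded summands. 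For the soft feature map \eqref{eq:softdist2fea} we have $\phi_{\bomega}(\cdot)\in[0,1]$, hence $Z_i\in[0,1]$ (for the raw-distance map \eqref{eq:dist2fea} one first normalizes $d$ by $L$ to obtain the same boundedness). First I would apply Hoeffding's inequality to this fixed pair to obtain a two-sided sub-Gaussian tail of the form $P\{|\Delta_R(\bx,\by)|>t\}\le 2\exp(-c R t^2)$, where $c$ is an absolute constant determined by the range of $Z_i$.

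The crux is the upgrade to a bound that is uniform over all $(\bx,\by)\in\X\times\X$. Here is exactly where the argument departs from the classical analysis of \cite{rahimi2008random}: their kernel is shift-invariant, so the deviation depends only on the scalar difference $\bx-\by$ ranging over a compact set, which they cover with an $\epsilon$-net and glue together with a Lipschitz (gradient) bound. Our kernel is neither shift-invariant nor defined on fixed-length vectors, so no such scalar reduction is available. Instead I would exploit that under the bounded-length assumption the input space is \emph{finite}: $\X\subseteq\Sigma^L$ gives $|\X|\le|\Sigma|^L$ and therefore at most $|\Sigma|^{2L}=e^{2L\log|\Sigma|}$ distinct pairs (variable length being absorbed by bounding every length by $L$). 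A direct union bound over these pairs multiplies the per-pair tail by $e^{2L\log|\Sigma|}$, producing exactly the exponent $2L\log|\Sigma|-Rt^2/2$ and the leading constant in the claimed inequality.

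To read off the sample complexity I would set the right-hand side $8e^{2L\log|\Sigma|-R\epsilon^2/2}\le\delta$ and solve for $R$, which immediately yields the additive $\tfrac1{\epsilon^2}\log\tfrac1\delta$ term (the confidence budget) and the $\tfrac{L\log|\Sigma|}{\epsilon^2}$ term (the union over pairs). The remaining factor $\log(\gamma/\epsilon)$ is the one place where a genuine covering step re-enters: because the soft feature $\exp(-\gamma d)$ varies at rate $\gamma$, resolving the attainable feature values uniformly to accuracy $\epsilon$ requires discretizing their range at scale $\epsilon/\gamma$, contributing a $\log(\gamma/\epsilon)$ covering term. This is also what inflates the per-pair constant to $8$ and weakens the exponent to $Rt^2/2$, matching the anchor/Lipschitz error split of \cite{rahimi2008random}.

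The main obstacle I anticipate is not the pointwise concentration, which is routine, but making the uniform step both correct and tight: the union factor $e^{2L\log|\Sigma|}$ is exponentially large in $L$, so the whole estimate is vacuous unless the per-pair tail is sub-Gaussian with a constant independent of $L$, which in turn forces the careful use of the boundedness $\phi_{\bomega}\in[0,1]$ (and a rescaling in the raw-distance case) rather than the crude range $[0,L^2]$ of the unnormalized product. Reconciling the finiteness-based union bound with the $\gamma$-dependent feature-resolution step, so that the two contributions combine into the single clean tail stated in the theorem, is the delicate part of the argument.
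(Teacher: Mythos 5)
Your first two paragraphs are exactly the paper's proof: Hoeffding's inequality applied pointwise to the centered average $\Delta_R(\bx,\by)$, followed by a union bound over the finite set of string pairs that the bounded-length assumption makes available --- and your diagnosis of why this replaces the $\epsilon$-net/Lipschitz machinery of \cite{rahimi2008random} (no shift-invariance, finite input space) is the right one. Two corrections of detail. On the counting: the paper bounds the number of strings of length \emph{at most} $L$ by $2|\Sigma|^L$ (your $|\X|\leq|\Sigma|^L$ only counts full-length strings), so the union bound gives $2|\X|^2e^{-Rt^2/2}\leq 8e^{2L\log|\Sigma|-Rt^2/2}$; the constant $8$ is just $2\cdot 2^2$ from this count, and the exponent $Rt^2/2$ is just Hoeffding with the range $|\Delta_R|\leq 1$ --- neither has anything to do with a covering step. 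On boundedness: your observation that $|\Delta_R|\leq 1$ holds for the soft map \eqref{eq:softdist2fea} but fails for the raw-distance map \eqref{eq:dist2fea} without normalizing $d$ is a legitimate caveat that the paper's sketch silently glosses over, and your rescaling fix is the right repair.

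The genuine flaw is your third paragraph. Once you have union-bounded over the finite set $\X\times\X$, uniformity is complete; no discretization of the feature values at scale $\epsilon/\gamma$ is needed, and grafting such a net on top of the union bound would be redundant --- you would be covering a set you have already enumerated, and a correct accounting would have to avoid charging for the same fluctuation twice. Setting $8e^{2L\log|\Sigma|-R\epsilon^2/2}\leq\delta$ yields $R\geq\frac{2}{\epsilon^2}\bigl(2L\log|\Sigma|+\log(8/\delta)\bigr)$, which already suffices; the $\log(\gamma/\epsilon)$ factor in the theorem's statement does not follow from, and is not needed for, the stated tail bound (note the tail bound contains no $\gamma$ at all). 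It is slack, apparently carried over from covering-number analyses in the continuous-input setting \cite{rahimi2008random,wu2018d2ke}, and since the theorem only asserts sufficiency, an inflated $R$ is harmless. The paper's own proof likewise stops at the tail bound and simply says it ``leads to the result''; it contains no covering argument. So your attempt to ``reconcile'' the union bound with a $\gamma$-dependent net solves a problem that does not exist, and the specific claims that this step produces the leading constant $8$ and weakens the exponent to $Rt^2/2$ are incorrect.
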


\begin{proof}[Proof Sketch]

Since $E[\Delta_R(\bx,\by)]=0$ and $|\Delta_R(\bx,\by)|\leq 1$, from Hoefding's inequality, we have
$$
P\left\{ |\Delta_R(\bx,\by)|\geq t \right\} \leq 2 \exp(-Rt^2/2)
$$
and since the number of strings in $\X$ is bounded by $2|\Sigma|^{L}$. Through an union bound, we have
\begin{align*}
P\left\{ \max_{\bx,\by\in\X}|\Delta_R(\bx,\by)|\geq t \right\} 
&\leq 2 |\X|^2 \exp(-Rt^2/2) \\
&\leq 8\exp\left(2L\log|\Sigma|-Rt^2/2\right),
\end{align*}
which leads to the result.
\end{proof}

Theorem \ref{thm:convergence} tells us that for \emph{any} pair of two strings $\bx,\by\in\X$, one can guarantee a kernel approximation of error less than $\epsilon$ as long as $R \gtrapprox L\log(|\Sigma|)/\epsilon^2$ up to the logarithmic factor.

\section{Experiments}
We carry out the experiments to demonstrate the effectiveness and efficiency of the proposed method, and compare against total five state-of-the-art baselines on nine different string datasets that are widely used for testing the performance of string kernels. We implement our method in Matlab and make full use of C-MEX function 
for the computationally extensive component of LD. 

\begin{table}[htbp]
\vspace{-2mm}
\centering
\footnotesize
\caption{Statistical properties of the datasets.}
\label{tb:info of datasets}
\vspace{-3mm}
\begin{center}
    \begin{tabular}{ c c c c c c c }
    \hline
    Application & Name & Alphabet & Class & Train & Test & Length \\ \hline 
    Protein & ding-protein & 20 & 27 & 311 & 369 & 26/967 \\
    Protein & fold & 20 & 26 & 2700 & 1159 & 20/936 \\
    Protein & superfamily & 20 & 74 & 3262 & 1398 & 23/1264  \\ 
    DNA/RNA & splice & 4 & 3 & 2233 & 957 & 60  \\
    DNA/RNA & dna3-class1 & 4 & 2 & 3200 & 1373 & 147 \\
    DNA/RNA & dna3-class2 & 4 & 2 & 3620 & 1555 & 147 \\
    DNA/RNA & dna3-class3 & 4 & 2 & 4025 & 1725 & 147 \\
    Image & mnist-str4 & 4    & 10 & 60000 & 10000 & 34/198 \\ 
    Image & mnist-str8 & 8    & 10 & 60000 & 10000 & 17/99 \\ \hline
    \end{tabular}
\end{center}
\vspace{-2mm}
\end{table}

\begin{table*}[tbhp]
\centering
\caption{Comparisons among eight variants of RSE in terms of classification accuracy. Each sampling strategy combines either DF (direct LD distance as features in String Kernels \eqref{eq:dist2fea}) or SF (soft version of LD distance as features in String Kernels \eqref{eq:softdist2fea}).} 
\label{tb:comp_rse_allvariants}
\small
\newcommand{\Bd}[1]{\textbf{#1}}
\vspace{-3mm}
\begin{center}
    \begin{tabular}{ c c c c c c c c c}
    \hline
    \multicolumn{1}{c}{Methods}
    & \multicolumn{1}{c}{RSE(RF-DF)} 
    & \multicolumn{1}{c}{RSE(RF-SF)}
    & \multicolumn{1}{c}{RSE(RFD-DF)}
    & \multicolumn{1}{c}{RSE(RFD-SF)}
    & \multicolumn{1}{c}{RSE(SS-DF)} 
    & \multicolumn{1}{c}{RSE(SS-SF)} 
    & \multicolumn{1}{c}{RSE(BSS-DF)} 
    & \multicolumn{1}{c}{RSE(BSS-SF)} \\ \hline 
	\multicolumn{1}{c}{Datasets} & Accu & Accu & Accu & Accu & Accu & Accu & Accu & Accu  \\ \hline
    ding-protein  & 52.57 & 51.76 & 51.22 & 49.32 & 48.50 & \Bd{53.65} & 51.49 & 52.30 \\
    fold  & 73.94 & 72.90 & 74.37 & 72.47 & \Bd{75.41} & 75.21 & 74.72 & 75.13  \\
	superfamily  & 74.03 & 73.46 & 74.67 & 70.88 & \Bd{77.46} & 77.13 & 74.82 & 75.52 \\
	splice & 86.72 & 86.31 & 86.20 & 82.86 & 88.71 & 88.08 & 89.76 & \Bd{90.17} \\
	dna3-class1 & 78.29 & 77.20 & 77.85 & 79.46 & 81.64 & 80.84 & \Bd{83.39} & 82.66 \\ 
    dna3-class2 & 88.48 & 89.51 & 87.97 & 90.41 & 87.20 & \Bd{90.61} & 89.51 & 90.48 \\
    dna3-class3  & 75.94 & 78.55 & 72.0 & 70.72 & 70.89 & 72.87 & 78.20 & \Bd{78.78} \\
    mnist-str4  & 98.52 & 98.43 & 98.43 & 98.31 & \Bd{98.76} & 98.61 & 98.75 & 98.71 \\
    mnist-str8  & 98.45 & 98.48 & 98.39 & 98.31 & \Bd{98.54} & 98.51 & 98.50 & 98.53 \\ \hline
    \end{tabular}   
\end{center}
\vspace{-2mm}
\end{table*}

\textbf{Datasets.} 
We apply our method on nine benchmark string datasets across different main applications including protein, DNA/RNA, and image. 
Table \ref{tb:info of datasets} summarizes the properties of datasets that are collected from the UCI Machine Learning repository \cite{frank2010uci}, the LibSVM Data Collection \cite{chang2011libsvm}, and partially overlapped with various string kernel references \cite{farhan2017efficient,kuksa2009scalable}. 
For all datasets, the size of alphabet is between 4 and 20. The number of classes range between 2 and 74. The larger number of classes typically make the classification task more challenging. 
One particular property associated with string data is possibly high variation in length of the strings, which exhibits mostly in the protein datasets with range between 20 and 1264. This large variation presents significant challenges to the most of methods. 
We divided each dataset into 70/30 train and test subsets (if there was no predefined train/test split). 

\textbf{Variants of RSE.} We have two different global string kernels and four different random string generation methods proposed in Section \ref{sec:global string kernels}, resulting in the total 8 different combinations of RSE. We will investigate the properties and performance of each variant in the subsequent section.  
Here, we list the different variants as follows: 
i) \textbf{RSE(RF-DF):} RSE(RF) with direct LD distance as features in \eqref{eq:dist2fea};
ii) \textbf{RSE(RF-SF):} RSE(RF) with soft version of LD distance as features in \eqref{eq:softdist2fea};
iii) \textbf{RSE(RFD-DF):} RSE(RFD) with direct LD distance;
iv) \textbf{RSE(RFD-SF):} RSE(RFD) with soft version of LD distance;
v) \textbf{RSE(SS-DF):} RSE(SS) with direct LD distance;
vi) \textbf{RSE(SS-SF):} combines the data-dependent sub-strings generated from dataset with soft LD distance;
vii) \textbf{RSE(BSS-DF):} generates blocks of sub-strings from data-dependent distribution and uses direct LD distance;
viii) \textbf{RSE(BSS-SF):} generates blocks of sub-strings from data-dependent distribution and uses soft-version LD distance. 

\textbf{Baselines.} We compare our method RSE against five state-of-the-art kernel and deep learning based methods: \\
\textbf{SSK} \cite{kuksa2009scalable}: state-of-the-art scalable algorithms for computing exact string kernels with inexact matching - ($k,m$)-mismatch kernel. \\
\textbf{ASK} \cite{farhan2017efficient}: latest advancement for approximating ($k,m$)-mismatch string kernel for larger $k$ and $m$. \\
\textbf{KSVM} \cite{loosli2016learning}: state-of-the-art alignment based kernels using the original (indefinite) similarity measure in the original Krein space. \\
\textbf{LSTM} \cite{greff2017lstm}: long short-term memory (LSTM) architecture, state-of-the-art models for sequence learning. \\
\textbf{GRU} \cite{cho2014properties}: a gated recurrent unit (GRU) achieving comparable performance to LSTM \cite{chung2014empirical}.


For deep learning methods, we use Python Deep Learning Library Keras. 
Both LSTM and GRU models are trained using the Adam optimizer \cite{DBLP:journals/corr/KingmaB14}, with mini-batch size 64. The learning rate is set to 0.001. We apply the dropout strategy \cite{DBLP:journals/jmlr/SrivastavaHKSS14} with a ratio of 0.5 to avoid overfitting. Gradients are clipped when their norm is bigger than 20. 
We set the max number of epochs 200. It is easy to see that most of Protein and DNA/RNA datasets have relatively small size of datasets, except for two image datasets. Therefore, to overcome potential over-fitting issue, we tune the number of hidden layers (using only 1 or 2) and the size of hidden state between 60 and 150.
We use one-hot encoding scheme with the size of alphabet in the corresponding string data. 

\begin{table*}[tbhp]
\centering
\caption{Comparing RSE against other state-of-the-art methods in terms of classification accuracy and computational time (seconds). The symbol "--" stands for either "run out of memory" (with total 256G) or runtime greater than 36 hours.} 
\label{tb:comp_allbaselines}
\small
\newcommand{\Bd}[1]{\textbf{#1}}
\vspace{-3mm}
\begin{center}
    \begin{tabular}{c cc cc cc cc cc cc}
    \hline
    \multicolumn{1}{c}{Methods}
    & \multicolumn{2}{c}{RSE(BSS-SF)} 
    & \multicolumn{2}{c}{SSK}
    & \multicolumn{2}{c}{ASK}
    & \multicolumn{2}{c}{KSVM}
    & \multicolumn{2}{c}{LSTM}
    & \multicolumn{2}{c}{GRU} \\ \hline 
	\multicolumn{1}{c}{Datasets} & Accu & Time & Accu & Time & Accu & Time & Accu & Time & Accu & Time & Accu & Time \\ \hline
    ding-protein  & \Bd{52.30} & 54.8 & 28.72 & 3.0 & 11.92 & 20.0 & 39.83 & 25.8  & 31.33 & 576.0 & 31.90 & 350.0 \\
    fold  & \Bd{75.13} & 289.51 & 46.5 & 85.0 & 48.83 & 1070.0 & 74.37 & 643.9 & 68.08 & 13778.0 & 66.83 & 6452.0 \\
	superfamily  & \Bd{75.52} & 469.9 & 44.63 & 140.0 & 44.70 & 257.0 & 69.59 & 1389.9 & 63.38 & 16778.0 & 62.81 & 7974.0 \\
	splice & \Bd{90.17} & 78.4 & 71.26 & 68.0 & 71.57 & 184.0 & 67.29 & 148.8 & 86.94 & 166.0 & 88.39  & 93.2 \\
	dna3-class1 & 82.66 & 585.6 & \Bd{86.38} & 313.0 & 86.23 & 667.0 & 48.43 & 760.4 & 80.1 & 866.0 & 81.78 & 436.0 \\ 
    dna3-class2 & \Bd{90.48} & 432.2 & 82.76 & 475.0 & 82.63 & 916.0 & 46.10 & 991.8 & 83.08 & 1000.0 & 85.13 & 536.4 \\
    dna3-class3  & 78.78 & 1436.8 & 77.91 & 553.0 & 78.14 & 926.0 & 44.28 & 1297.2 & \Bd{83.36} & 2400.0 & 81.75 & 1389.0 \\
    mnist-str4  & \Bd{98.71} & 4287.2 & -- & -- & -- & -- & -- & $> \text{36 hours}$ & 98.63 & 13090.0 & 98.50 & 7542.0 \\
    mnist-str8  & 98.53 & 2010.2 & -- & -- & -- & -- & 96.80 & 859670.0 & \Bd{98.61} & 14618.0 & 98.45  & 7386.0  \\ \hline
    \end{tabular}   
\end{center}
\vspace{-2mm}
\end{table*}

\subsection{Comparison Among All Variants of RSE}

\textbf{Setup.} We investigate the behaviors of eight different variants of our proposed method RSE in terms of string classification accuracy. The best values for $\gamma$ and $D_{max}$ for the length of random string were searched in the ranges \text{[1e-5, 1]} and \text{[5, 100]}, respectively. Since we can generate random samples from the distribution, we can use as many as needed to achieve performance close to an exact kernel. We report the best number in the range $R = [4, \ 8192]$ (typically the larger $R$ is, the better the accuracy). 
We employ a linear SVM implemented using LIBLINEAR \cite{fan2008liblinear} on the RSE embeddings.

\textbf{Results.} 
Table \ref{tb:comp_rse_allvariants} shows the comparison results among eight different variants of RSE for various string classification tasks. We empirically observed some interesting conclusions. First, we can see that the data-dependent sampling strategies (including sub-strings and block sub-strings) generally outperform their data-independent counterparts. This may be because the data-dependent has smaller hypothesis space associated with given data that could capture the global properties better with limited samples and thus yield more favorable generalization errors. 
This is consistent with recent studies about random features in \cite{ionescu2017large,rudi2017generalization}. 
Second, there is no clear winner which one is significantly better than others (with DF won total 5 while SF won 4). However, when combining SS or BSS sampling strategies, using soft-version LD distance as features (SF) often achieve close performance compared to that of using LD distance as features (DF), while the opposite is not true. Therefore, we choose RSE(BSS-SF) to compare with other baselines in the subsequent experiments.
For instance, on datasets ding-protein and dna3-class2, RSE(SS-SF) has significantly better accuracy than RSE(SS-DF). It may suggest that the best candidate variant of RSE should be combining SF with data-dependent sampling strategies (SS or BSS) in practice. 

\begin{figure*}[htbp]
\centering
    \begin{subfigure}[b]{0.23\textwidth}
      \includegraphics[width=\textwidth]{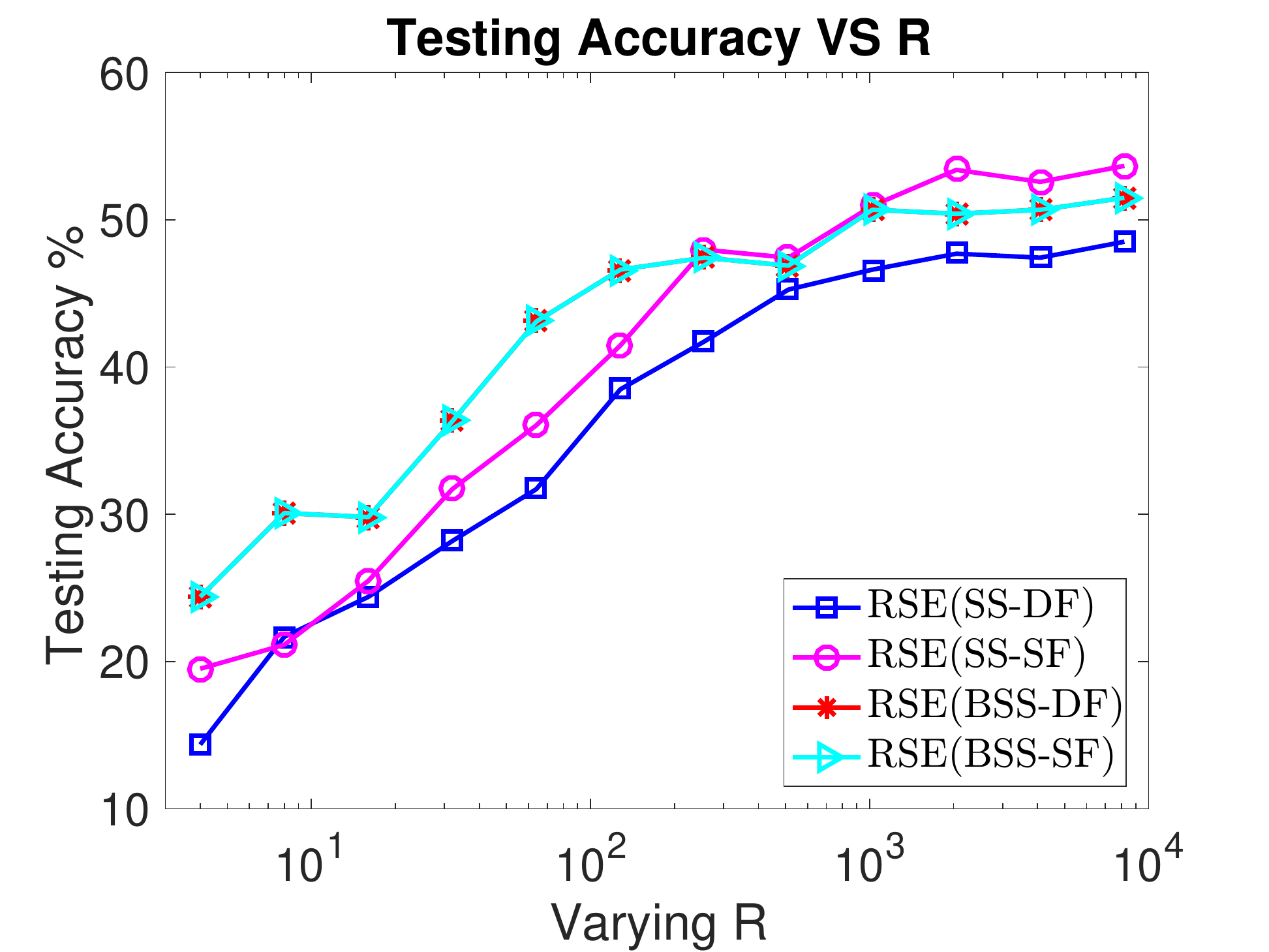}
      \caption{ding-protein}
      \label{fig:exptsA_accu_varyingR_ding-protein}
    \end{subfigure}
    \begin{subfigure}[b]{0.23\textwidth}
      \includegraphics[width=\textwidth]{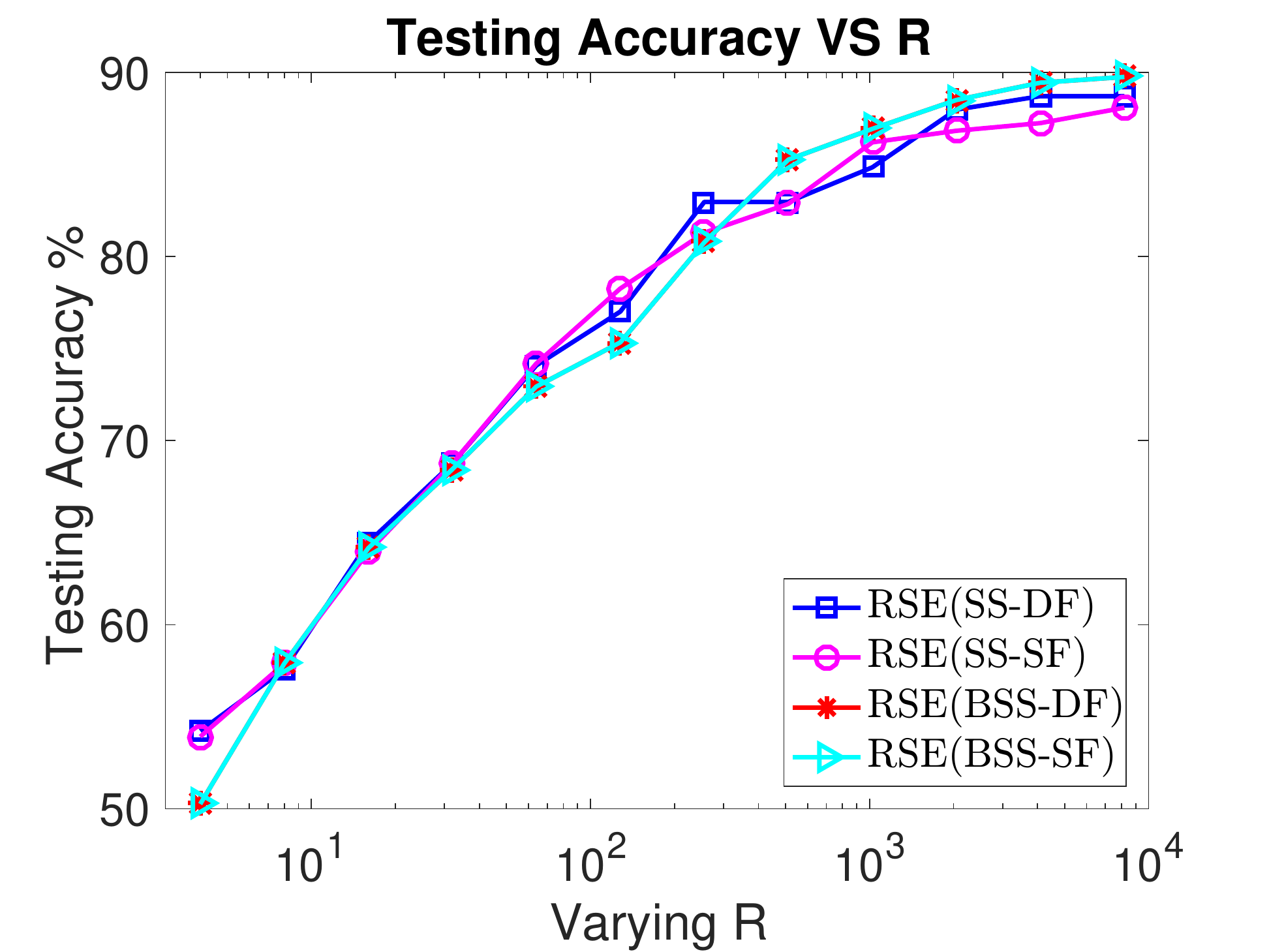}
      \caption{splice}
      \label{fig:exptsA_accu_varyingR_splice}
    \end{subfigure}
    \begin{subfigure}[b]{0.23\textwidth}
      \includegraphics[width=\textwidth]{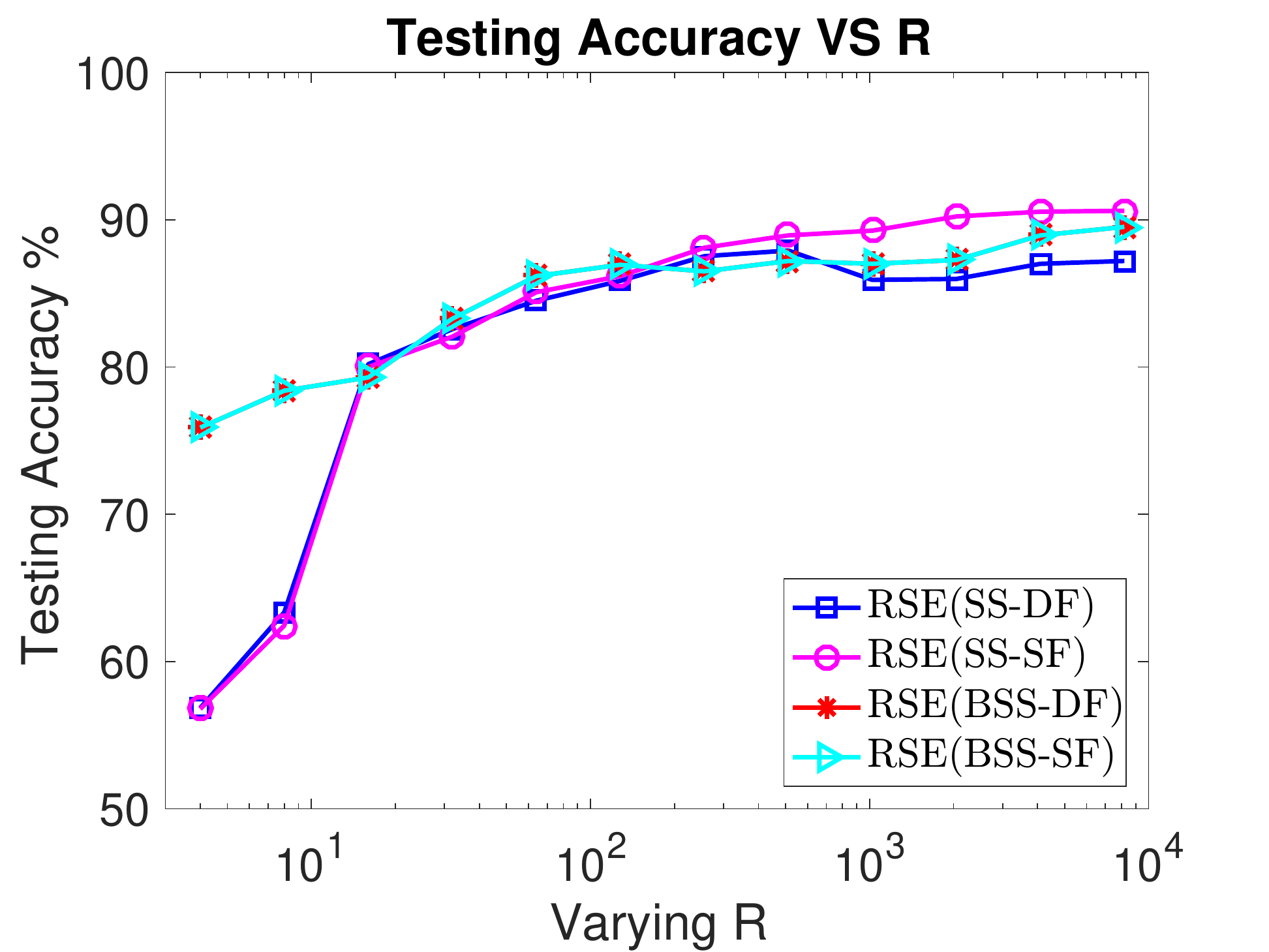}
      \caption{dna3-class2}
      \label{fig:exptsA_accu_varyingR_dna3-class2}
    \end{subfigure}
    \begin{subfigure}[b]{0.23\textwidth}
      \includegraphics[width=\textwidth]{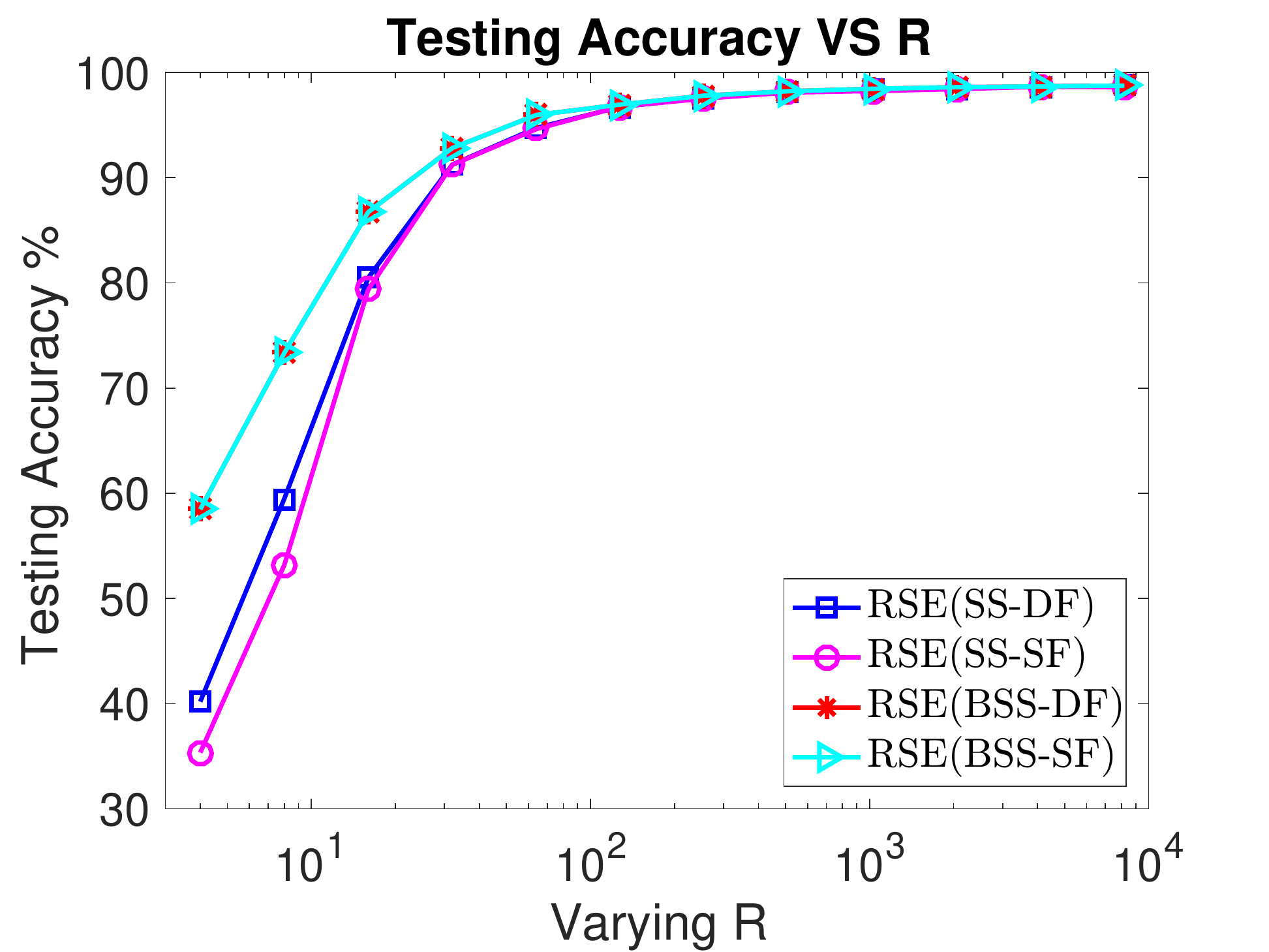}
      \caption{mnist-str4}
      \label{fig:exptsA_accu_varyingR_mnist-str4}
    \end{subfigure}
    \begin{subfigure}[b]{0.23\textwidth}
      \includegraphics[width=\textwidth]{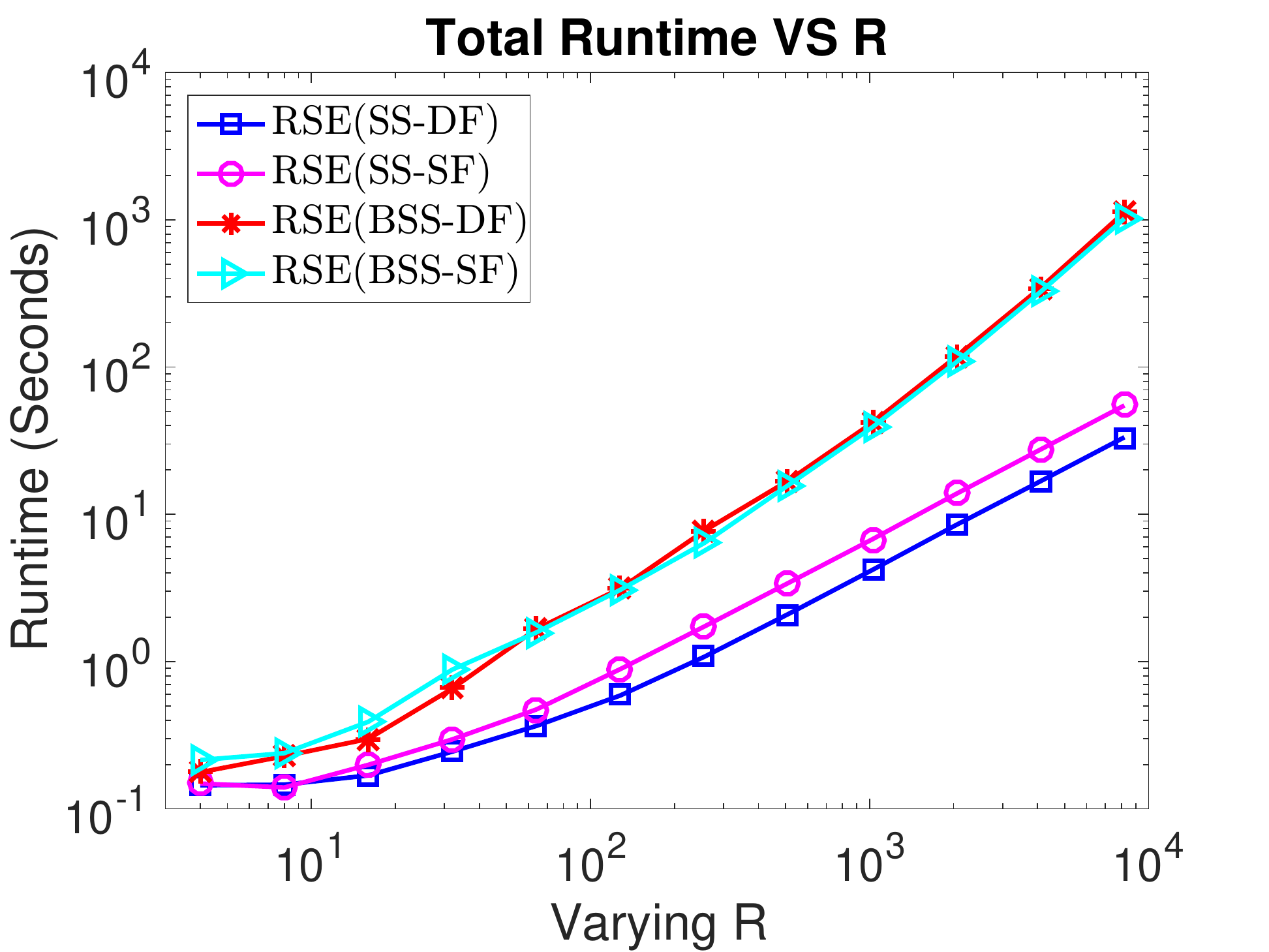}
      \caption{ding-protein}
      \label{fig:exptsA_time_varyingR_ding-protein}
    \end{subfigure}
  \begin{subfigure}[b]{0.23\textwidth}
      \includegraphics[width=\textwidth]{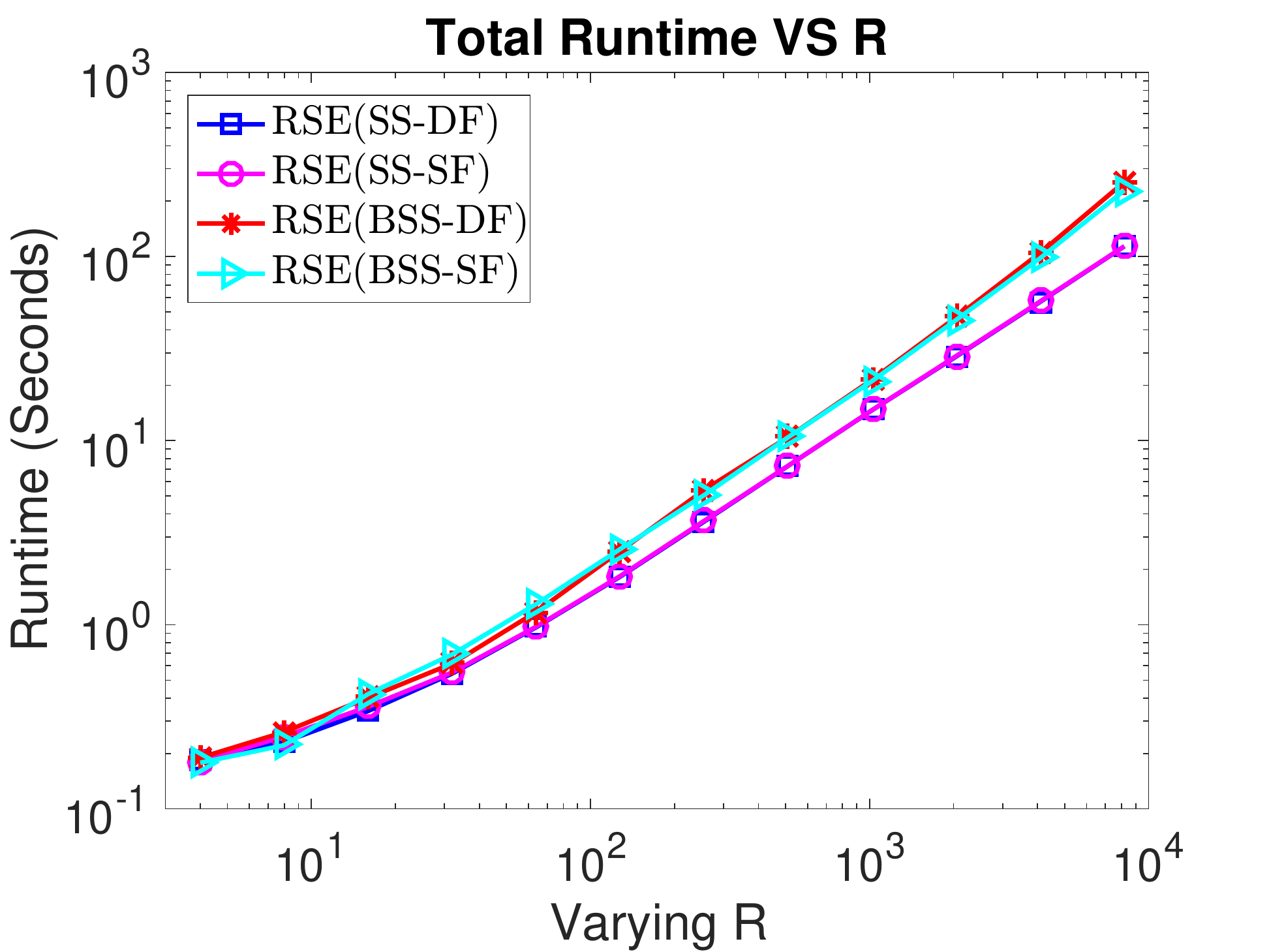}
      \caption{splice}
      \label{fig:exptsA_time_varyingR_splice}
    \end{subfigure}
  \begin{subfigure}[b]{0.23\textwidth}
      \includegraphics[width=\textwidth]{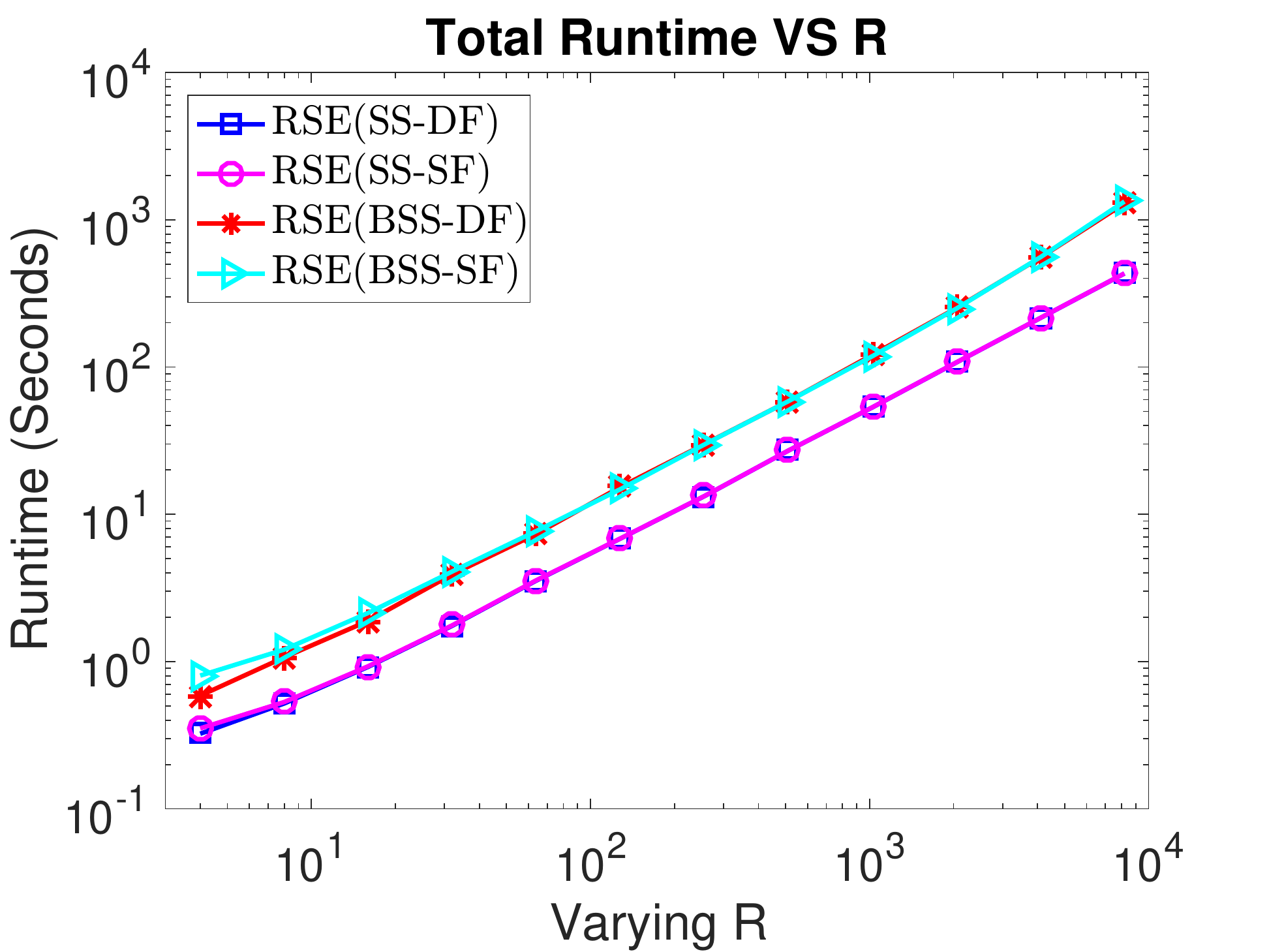}
      \caption{dna3-class2}
      \label{fig:exptsA_time_varyingR_dna3-class2}
    \end{subfigure}
    \begin{subfigure}[b]{0.23\textwidth}
      \includegraphics[width=\textwidth]{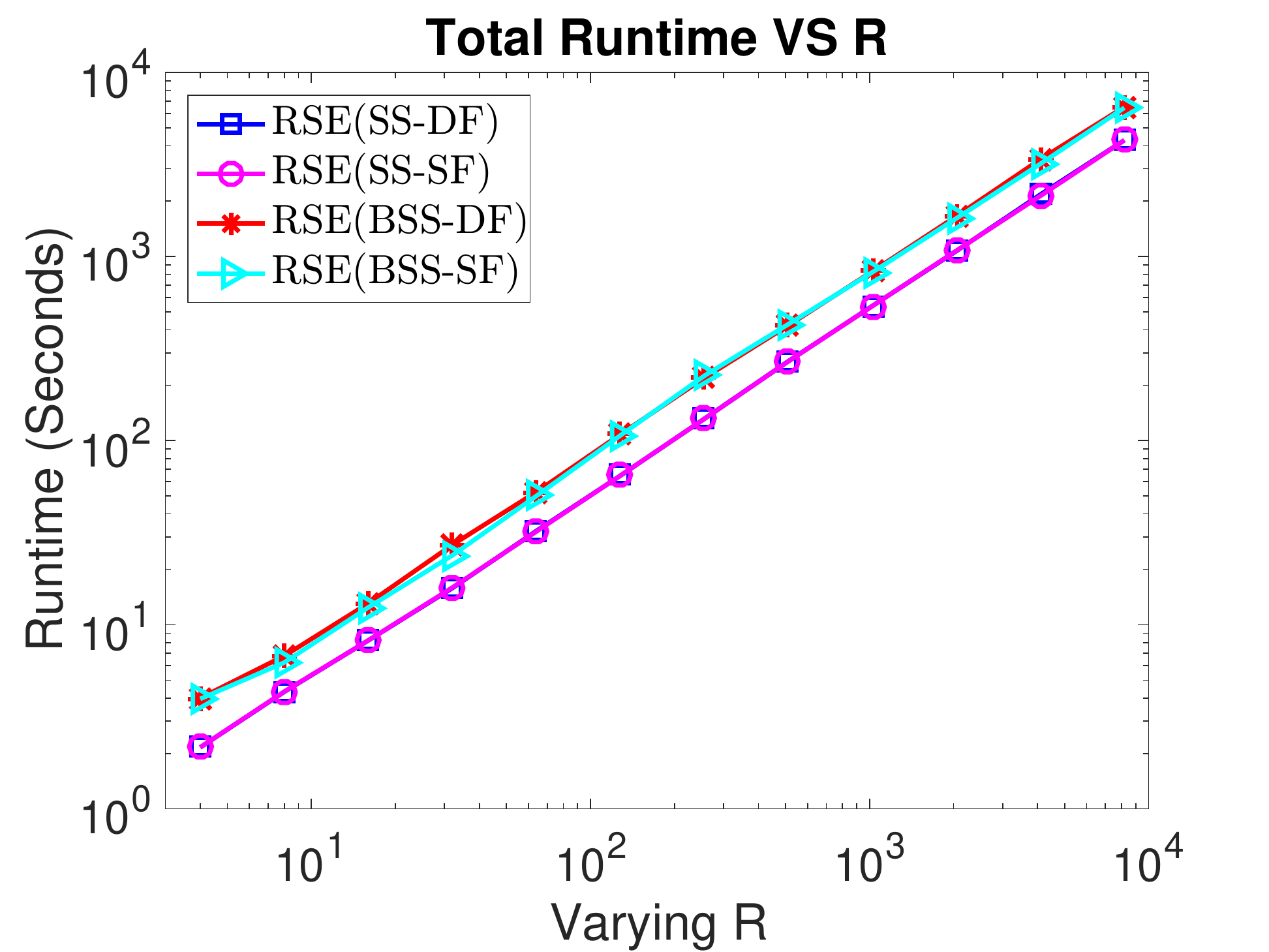}
      \caption{mnist-str4}
      \label{fig:exptsA_time_varyingR_mnist-str4}
    \end{subfigure}
\vspace{-3mm}
\caption{Test accuracy and computational runtime of RSE(SS-DF), RSE(SS-SF), RSE(BSS-DF), and RSE(BSS-SF) when varying $R$. }
\label{fig:exptsA_accu_time_varyingR}
\end{figure*}

\subsection{Comparison of RSE Against All Baselines}
\textbf{Setup.} We assess the performance of RSE against five other state-of-the-art kernel and deep learning approaches in terms of both string classification accuracy and computational time. For RSE, we choose the variant RSE(BSS\_SF) owing to its consistently robust performance and report the results on each dataset from Table \ref{tb:comp_rse_allvariants}. For SSK and ASK, we use the public available implementations of these two methods written in C and in Java, respectively. To achieve the best performance of SSK and ASK, following \cite{farhan2017efficient,kuksa2009scalable} we generate the different combinations of ($k,m$)-mismatch kernel, where $k$ is between 8 and 12 and $m$ is between 2 and 5. We use LIBSVM \cite{chang2011libsvm} for these precomputed kernel matrices and search for the best hyperparameter (regularization) of SVM in the range of \text{[1e-5 1e5]}. For LSTM and GRU, all experiments were conducted on a server with 8 CPU cores and NVIDIA Tesla K80 accelerator with two GK210 GPU. However, to facilitate a relatively fair runtime comparison, we directly run two deep learning models on CPU only and report their runtime. 

\begin{figure*}[htbp]
\centering
    \begin{subfigure}[b]{0.40\textwidth}
      \includegraphics[width=\textwidth]{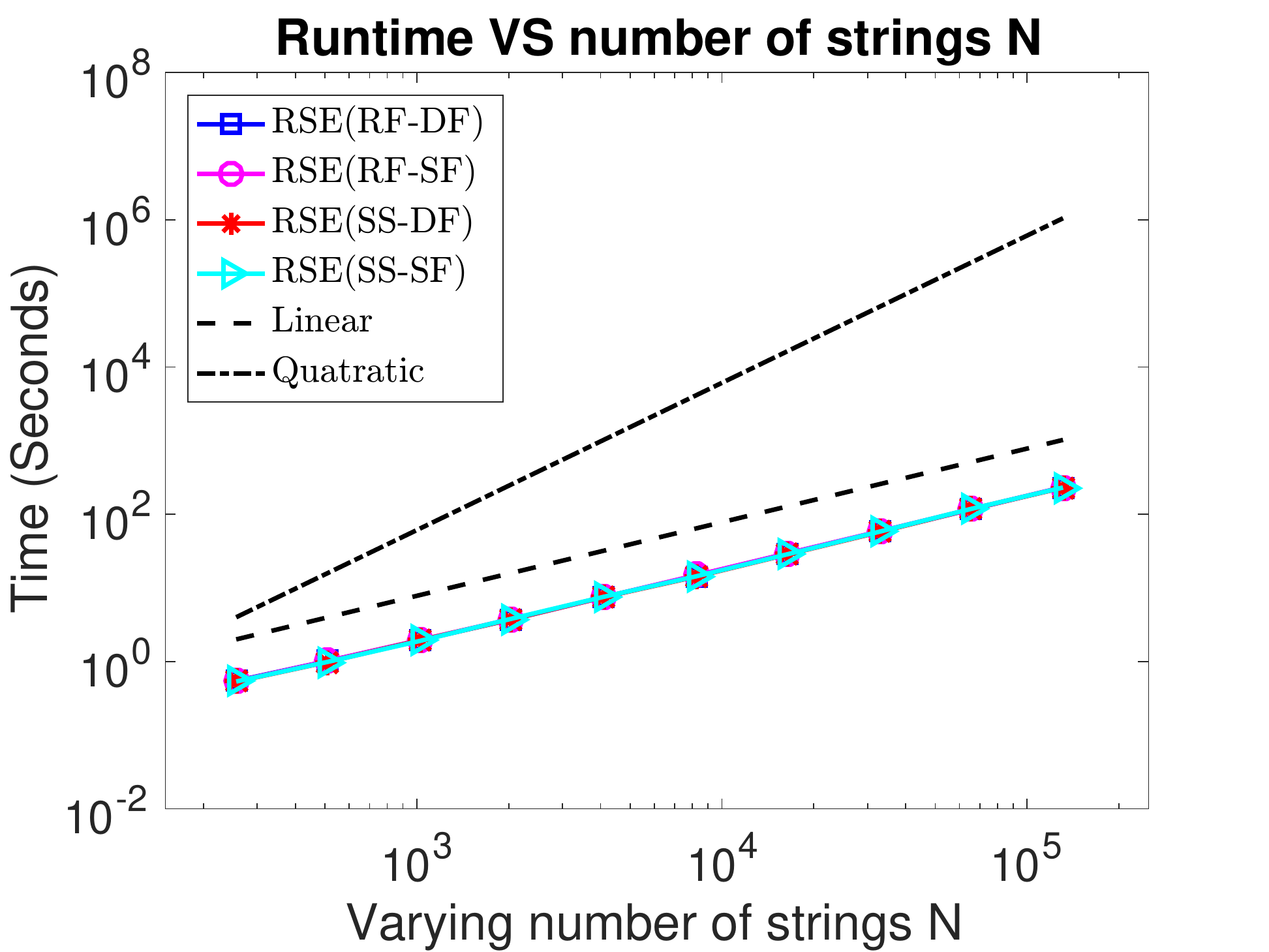}
      \caption{Number of Strings $N$}
      \label{fig:exptsB_time_varying_numStr}
    \end{subfigure}
  \begin{subfigure}[b]{0.40\textwidth}
      \includegraphics[width=\textwidth]{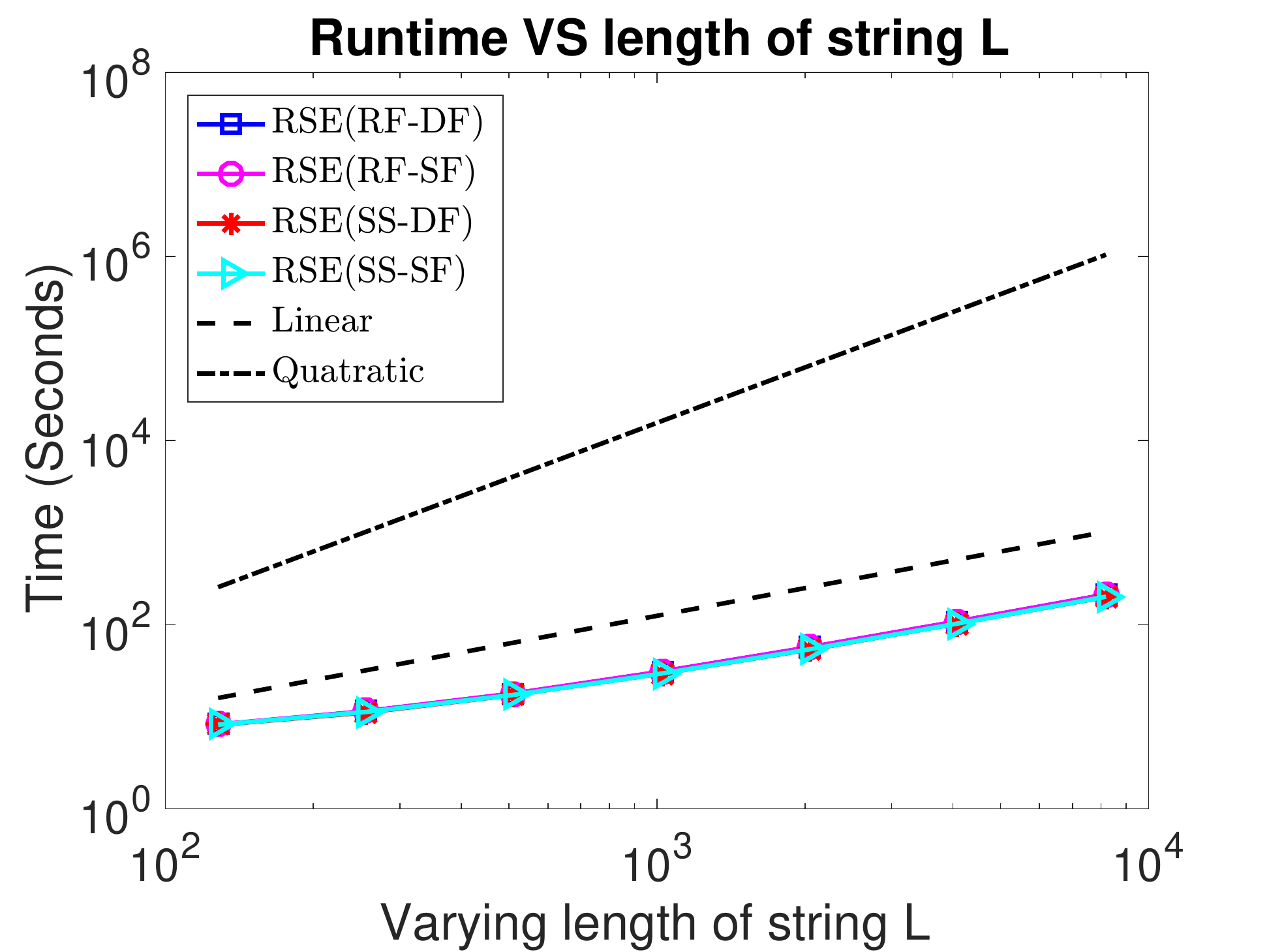}
      \caption{Length of String $L$}
      \label{fig:exptsB_time_varying_lenStr}
    \end{subfigure}
\vspace{-3mm}
\caption{Runtime for computing RSE string embeddings, and the overall runtime when varying number of strings $N$ and length of string $L$. (Default values: number of strings $N=10000$, length of string $L=512$). Linear and quadratic complexity are also plotted for easy comparisons.}
\label{fig:exptsB_time_varyingNL}
\vspace{-2mm}
\end{figure*}

\textbf{Results.} 
As shown in Table \ref{tb:comp_allbaselines}, RSE can consistently outperform or match all other baselines in terms of classification accuracy while requiring less computation time for achieving the same accuracy. The first interesting observation is that our method performs substantially better than SSK and ASK, often by a large margin, i.e., RSE achieves 25\% - 33\% higher accuracy than SSK and ASK on three protein datasets. This is because $(k,m)$-mismatch string kernel is sensitive to the strings of long length, which often causes the feature space size of the short sub-strings (k-mers) exponentially grow and leads to \emph{diagonal dominance problem}. More importantly, using only small sub-strings extracted from the original strings results in an inherently local perspective and fails to capture the global properties of strings of long length. 
Secondly, in order to achieving the same accuracy, the required runtime of RSE could be significantly less than that of SSK and ASK. For instance, on dataset superfamily, RSE achieves the accuracy 46.56\% using 3.7 second while SSK and ASK achieve similar accuracy 44.63\% and 44.79\% using 140.0 and 257.0 seconds respectively. 
Thirdly, RSE achieves much better performance than KSVM on all of datasets, highlighting the importance of truly p.d. kernel compared to the indefinite kernel even in the Krein space. 
Finally, compared to two state-of-the-art deep learning models, RSE still has shown clear advantages over LSTM and GRU, especially for the strings of long length. RSE achieves better accuracy than LSTM and GRU on 7 out of the total 9 datasets except on dna3-class3 and mnist-str8. 
It is well-known that Deep Learning based approaches typically require large amount of tranning data, which could be one of the important reasons why they performed worse on relatively small data but slightly better or similar performance on large data such as dna3-class3 and mnist-str8. 


\subsection{Accuracy and Runtime of RSE When Varying R}
\textbf{Setup.} We now conduct experiments to investigate the
behavior of four best variants of RSE by varying the number R of random strings. The hyperparameter $D_{max}$ is obtained from the previous cross-validations on the training set. We set $R$ in the range \text{[4, 8192]}. We report both testing accuracy and runtime when increasing random string embedding size $R$.

\textbf{Results.} 
Fig. \ref{fig:exptsA_accu_time_varyingR} shows how the testing accuracy and runtime changes when increasing R. We can see that all selected variants of RSE converge very fast when increasing R from a small number (R = 4) to relatively large number. Interestingly, using block sub-strings (BSS) sampling strategy typically leads to a better convergence at the beginning since BSS could produce multiple random strings at every sampling time that sometimes offers much help in boosting the performance. However, when increasing $R$ to larger number, all variants converge similarly to the optimal performance of the exact kernel. This confirms our analysis in Theory \ref{thm:convergence} that the RGE approximation can guarantee the fast convergence to the exact kernel. Another important observation is that all variants of RSE scales linearly with increase in the size of the random string embedding $R$. This is a particularly important property for scaling up large-scale string kernels. On the other hand, one can easily achieve the good trade-off between the desired testing accuracy and the limited computational time, depending on the actual demands of the underlying applications. 

\subsection{Scalability of RSE When Varying Numbers of Strings $N$ and Length of String $L$ }

\textbf{Setup.} Next, we evaluate the scalability of RSE when varying number of strings $N$ and the length of a string $L$ on randomly generated string dataset. We change the number of strings in the range of $N = [128, \ 131072]$ and the length of a string in the range of $L = [128, \ 8192]$, respectively. When generating random string dataset, we choose its alphabet same as protein strings. We also set $D_{max} = 10$ and $R = 256$ for the hyperparameters related to RSE. We report the runtime for computing string embeddings using four variants of our method RSE. 

\textbf{Results.} 
As shown in Fig. \ref{fig:exptsB_time_varyingNL}, we have two important observations about the scalability of RSE. First, Fig. \ref{fig:exptsB_time_varying_numStr} clearly shows RSE scales linearly when increasing the number of strings $N$. Second, Fig. \ref{fig:exptsB_time_varying_lenStr} empirically corroborated that RSE also achieves linear scalability in terms of the length of string $L$. These emperical results provide a strong evidence to demonstrate that RSE derived from our newly proposed global string kernel indeed scales linearly in both number of string samples and length of string. Our method opens the door for developing a new family of string kernels that enjoy both higher accuracy and linear scalability on real-world string data.

\section{Conclusions}
In this paper, we present a new family of positive-definite string kernels that take into account the global properties hidden in the data strings through the global alignments measured by Edit Distance. Our Random String Embedding, derived from the proposed kernel through Random Feature approximation, enjoys double benefits of producing higher classification accuracy and scaling linearly in terms of both number of strings and the length of a string. 
Our newly defined global string kernels pave a simple yet effective way to handle real-world large-scale string data. 

Several interesting future directions are listed below: 
i) our method can be further exploited with other distance measure that consider the global or local alignments; 
ii) other non-linear solver can be applied to potentially improve the classification of our embedding compared to our currently used linear SVM solver;  
iii) our method can be applied in the application domain like computational biology for the domain-specific problems. 

\bibliographystyle{ACM-Reference-Format}
\bibliography{RSE}


\begin{thebibliography}{48}


\ifx \showCODEN    \undefined \def \showCODEN     #1{\unskip}     \fi
\ifx \showDOI      \undefined \def \showDOI       #1{#1}\fi
\ifx \showISBNx    \undefined \def \showISBNx     #1{\unskip}     \fi
\ifx \showISBNxiii \undefined \def \showISBNxiii  #1{\unskip}     \fi
\ifx \showISSN     \undefined \def \showISSN      #1{\unskip}     \fi
\ifx \showLCCN     \undefined \def \showLCCN      #1{\unskip}     \fi
\ifx \shownote     \undefined \def \shownote      #1{#1}          \fi
\ifx \showarticletitle \undefined \def \showarticletitle #1{#1}   \fi
\ifx \showURL      \undefined \def \showURL       {\relax}        \fi
\providecommand\bibfield[2]{#2}
\providecommand\bibinfo[2]{#2}
\providecommand\natexlab[1]{#1}
\providecommand\showeprint[2][]{arXiv:#2}

\bibitem[\protect\citeauthoryear{Chang and Lin}{Chang and Lin}{2011}]%
        {chang2011libsvm}
\bibfield{author}{\bibinfo{person}{Chih-Chung Chang} {and}
  \bibinfo{person}{Chih-Jen Lin}.} \bibinfo{year}{2011}\natexlab{}.
\newblock \showarticletitle{LIBSVM: a library for support vector machines}.
\newblock \bibinfo{journal}{\emph{ACM transactions on intelligent systems and
  technology}} \bibinfo{volume}{2}, \bibinfo{number}{3} (\bibinfo{year}{2011}),
  \bibinfo{pages}{27}.
\newblock


\bibitem[\protect\citeauthoryear{Chen, Wu, Audhkhasi, Kingsbury, and
  Ramabhadrari}{Chen et~al\mbox{.}}{2016}]%
        {chen2016efficient}
\bibfield{author}{\bibinfo{person}{Jie Chen}, \bibinfo{person}{Lingfei Wu},
  \bibinfo{person}{Kartik Audhkhasi}, \bibinfo{person}{Brian Kingsbury}, {and}
  \bibinfo{person}{Bhuvana Ramabhadrari}.} \bibinfo{year}{2016}\natexlab{}.
\newblock \showarticletitle{Efficient one-vs-one kernel ridge regression for
  speech recognition}. In \bibinfo{booktitle}{\emph{ICASSP}}. IEEE,
  \bibinfo{pages}{2454--2458}.
\newblock


\bibitem[\protect\citeauthoryear{Chen, Garcia, Gupta, Rahimi, and
  Cazzanti}{Chen et~al\mbox{.}}{2009}]%
        {chen2009similarity}
\bibfield{author}{\bibinfo{person}{Yihua Chen}, \bibinfo{person}{Eric~K
  Garcia}, \bibinfo{person}{Maya~R Gupta}, \bibinfo{person}{Ali Rahimi}, {and}
  \bibinfo{person}{Luca Cazzanti}.} \bibinfo{year}{2009}\natexlab{}.
\newblock \showarticletitle{Similarity-based classification: Concepts and
  algorithms}.
\newblock \bibinfo{journal}{\emph{Journal of Machine Learning Research}}
  \bibinfo{volume}{10}, \bibinfo{number}{Mar} (\bibinfo{year}{2009}),
  \bibinfo{pages}{747--776}.
\newblock


\bibitem[\protect\citeauthoryear{Cho, Van~Merri{\"e}nboer, Bahdanau, and
  Bengio}{Cho et~al\mbox{.}}{2014}]%
        {cho2014properties}
\bibfield{author}{\bibinfo{person}{Kyunghyun Cho}, \bibinfo{person}{Bart
  Van~Merri{\"e}nboer}, \bibinfo{person}{Dzmitry Bahdanau}, {and}
  \bibinfo{person}{Yoshua Bengio}.} \bibinfo{year}{2014}\natexlab{}.
\newblock \showarticletitle{On the properties of neural machine translation:
  Encoder-decoder approaches}.
\newblock \bibinfo{journal}{\emph{arXiv:1409.1259}} (\bibinfo{year}{2014}).
\newblock


\bibitem[\protect\citeauthoryear{Chung, Gulcehre, Cho, and Bengio}{Chung
  et~al\mbox{.}}{2014}]%
        {chung2014empirical}
\bibfield{author}{\bibinfo{person}{Junyoung Chung}, \bibinfo{person}{Caglar
  Gulcehre}, \bibinfo{person}{KyungHyun Cho}, {and} \bibinfo{person}{Yoshua
  Bengio}.} \bibinfo{year}{2014}\natexlab{}.
\newblock \showarticletitle{Empirical evaluation of gated recurrent neural
  networks on sequence modeling}.
\newblock \bibinfo{journal}{\emph{arXiv:1412.3555}} (\bibinfo{year}{2014}).
\newblock


\bibitem[\protect\citeauthoryear{Cortes, Haffner, and Mohri}{Cortes
  et~al\mbox{.}}{2004}]%
        {cortes2004rational}
\bibfield{author}{\bibinfo{person}{Corinna Cortes}, \bibinfo{person}{Patrick
  Haffner}, {and} \bibinfo{person}{Mehryar Mohri}.}
  \bibinfo{year}{2004}\natexlab{}.
\newblock \showarticletitle{Rational kernels: Theory and algorithms}.
\newblock \bibinfo{journal}{\emph{Journal of Machine Learning Research}}
  \bibinfo{volume}{5}, \bibinfo{number}{Aug} (\bibinfo{year}{2004}),
  \bibinfo{pages}{1035--1062}.
\newblock


\bibitem[\protect\citeauthoryear{Cristianini, Shawe-Taylor,
  et~al\mbox{.}}{Cristianini et~al\mbox{.}}{2000}]%
        {cristianini2000introduction}
\bibfield{author}{\bibinfo{person}{Nello Cristianini}, \bibinfo{person}{John
  Shawe-Taylor}, {et~al\mbox{.}}} \bibinfo{year}{2000}\natexlab{}.
\newblock \bibinfo{booktitle}{\emph{An introduction to support vector machines
  and other kernel-based learning methods}}.
\newblock \bibinfo{publisher}{Cambridge university press}.
\newblock


\bibitem[\protect\citeauthoryear{Fan, Chang, Hsieh, Wang, and Lin}{Fan
  et~al\mbox{.}}{2008}]%
        {fan2008liblinear}
\bibfield{author}{\bibinfo{person}{Rong-En Fan}, \bibinfo{person}{Kai-Wei
  Chang}, \bibinfo{person}{Cho-Jui Hsieh}, \bibinfo{person}{Xiang-Rui Wang},
  {and} \bibinfo{person}{Chih-Jen Lin}.} \bibinfo{year}{2008}\natexlab{}.
\newblock \showarticletitle{LIBLINEAR: A library for large linear
  classification}.
\newblock \bibinfo{journal}{\emph{Journal of machine learning research}}
  \bibinfo{volume}{9}, \bibinfo{number}{Aug} (\bibinfo{year}{2008}),
  \bibinfo{pages}{1871--1874}.
\newblock


\bibitem[\protect\citeauthoryear{Farhan, Tariq, Zaman, Shabbir, and
  Khan}{Farhan et~al\mbox{.}}{2017}]%
        {farhan2017efficient}
\bibfield{author}{\bibinfo{person}{Muhammad Farhan}, \bibinfo{person}{Juvaria
  Tariq}, \bibinfo{person}{Arif Zaman}, \bibinfo{person}{Mudassir Shabbir},
  {and} \bibinfo{person}{Imdad~Ullah Khan}.} \bibinfo{year}{2017}\natexlab{}.
\newblock \showarticletitle{Efficient Approximation Algorithms for Strings
  Kernel Based Sequence Classification}. In \bibinfo{booktitle}{\emph{NIPS}}.
  \bibinfo{pages}{6938--6948}.
\newblock


\bibitem[\protect\citeauthoryear{Frank and Asuncion}{Frank and
  Asuncion}{2010}]%
        {frank2010uci}
\bibfield{author}{\bibinfo{person}{Andrew Frank} {and} \bibinfo{person}{Arthur
  Asuncion}.} \bibinfo{year}{2010}\natexlab{}.
\newblock \showarticletitle{UCI Machine Learning Repository [http://archive.
  ics. uci. edu/ml]. Irvine, CA: University of California}.
\newblock \bibinfo{journal}{\emph{School of information and computer science}}
  \bibinfo{volume}{213} (\bibinfo{year}{2010}).
\newblock


\bibitem[\protect\citeauthoryear{Gordon, Chervonenkis, Gammerman, Shahmuradov,
  and Solovyev}{Gordon et~al\mbox{.}}{2003}]%
        {gordon2003sequence}
\bibfield{author}{\bibinfo{person}{Leo Gordon}, \bibinfo{person}{Alexey~Ya
  Chervonenkis}, \bibinfo{person}{Alex~J Gammerman}, \bibinfo{person}{Ilham~A
  Shahmuradov}, {and} \bibinfo{person}{Victor~V Solovyev}.}
  \bibinfo{year}{2003}\natexlab{}.
\newblock \showarticletitle{Sequence alignment kernel for recognition of
  promoter regions}.
\newblock \bibinfo{journal}{\emph{Bioinformatics}} \bibinfo{volume}{19},
  \bibinfo{number}{15} (\bibinfo{year}{2003}), \bibinfo{pages}{1964--1971}.
\newblock


\bibitem[\protect\citeauthoryear{Greene and Cunningham}{Greene and
  Cunningham}{2006}]%
        {greene2006practical}
\bibfield{author}{\bibinfo{person}{Derek Greene} {and}
  \bibinfo{person}{P{\'a}draig Cunningham}.} \bibinfo{year}{2006}\natexlab{}.
\newblock \showarticletitle{Practical solutions to the problem of diagonal
  dominance in kernel document clustering}. In
  \bibinfo{booktitle}{\emph{ICML}}. ACM, \bibinfo{pages}{377--384}.
\newblock


\bibitem[\protect\citeauthoryear{Greff, Srivastava, Koutn{\'\i}k, Steunebrink,
  and Schmidhuber}{Greff et~al\mbox{.}}{2017}]%
        {greff2017lstm}
\bibfield{author}{\bibinfo{person}{Klaus Greff}, \bibinfo{person}{Rupesh~K
  Srivastava}, \bibinfo{person}{Jan Koutn{\'\i}k}, \bibinfo{person}{Bas~R
  Steunebrink}, {and} \bibinfo{person}{J{\"u}rgen Schmidhuber}.}
  \bibinfo{year}{2017}\natexlab{}.
\newblock \showarticletitle{LSTM: A search space odyssey}.
\newblock \bibinfo{journal}{\emph{IEEE transactions on neural networks and
  learning systems}} \bibinfo{volume}{28}, \bibinfo{number}{10}
  (\bibinfo{year}{2017}), \bibinfo{pages}{2222--2232}.
\newblock


\bibitem[\protect\citeauthoryear{Haasdonk and Bahlmann}{Haasdonk and
  Bahlmann}{2004}]%
        {haasdonk2004learning}
\bibfield{author}{\bibinfo{person}{Bernard Haasdonk} {and}
  \bibinfo{person}{Claus Bahlmann}.} \bibinfo{year}{2004}\natexlab{}.
\newblock \showarticletitle{Learning with distance substitution kernels}. In
  \bibinfo{booktitle}{\emph{Joint Pattern Recognition Symposium}}. Springer,
  \bibinfo{pages}{220--227}.
\newblock


\bibitem[\protect\citeauthoryear{Haussler}{Haussler}{1999}]%
        {haussler1999convolution}
\bibfield{author}{\bibinfo{person}{David Haussler}.}
  \bibinfo{year}{1999}\natexlab{}.
\newblock \bibinfo{booktitle}{\emph{Convolution kernels on discrete
  structures}}.
\newblock \bibinfo{type}{{T}echnical {R}eport}.
  \bibinfo{institution}{Department of Computer Science, University of
  California at Santa Cruz}.
\newblock


\bibitem[\protect\citeauthoryear{Huang, Avron, Sainath, Sindhwani, and
  Ramabhadran}{Huang et~al\mbox{.}}{2014}]%
        {huang2014kernel}
\bibfield{author}{\bibinfo{person}{Po-Sen Huang}, \bibinfo{person}{Haim Avron},
  \bibinfo{person}{Tara~N Sainath}, \bibinfo{person}{Vikas Sindhwani}, {and}
  \bibinfo{person}{Bhuvana Ramabhadran}.} \bibinfo{year}{2014}\natexlab{}.
\newblock \showarticletitle{Kernel methods match Deep Neural Networks on
  TIMIT.}. In \bibinfo{booktitle}{\emph{ICASSP}}. \bibinfo{pages}{205--209}.
\newblock


\bibitem[\protect\citeauthoryear{Ionescu, Popa, and Sminchisescu}{Ionescu
  et~al\mbox{.}}{2017}]%
        {ionescu2017large}
\bibfield{author}{\bibinfo{person}{Catalin Ionescu}, \bibinfo{person}{Alin
  Popa}, {and} \bibinfo{person}{Cristian Sminchisescu}.}
  \bibinfo{year}{2017}\natexlab{}.
\newblock \showarticletitle{Large-scale data-dependent kernel approximation}.
  In \bibinfo{booktitle}{\emph{AIStats}}. \bibinfo{pages}{19--27}.
\newblock


\bibitem[\protect\citeauthoryear{Kingma and Ba}{Kingma and Ba}{2014}]%
        {DBLP:journals/corr/KingmaB14}
\bibfield{author}{\bibinfo{person}{Diederik~P. Kingma} {and}
  \bibinfo{person}{Jimmy Ba}.} \bibinfo{year}{2014}\natexlab{}.
\newblock \showarticletitle{Adam: {A} Method for Stochastic Optimization}.
\newblock \bibinfo{journal}{\emph{CoRR}}  \bibinfo{volume}{abs/1412.6980}
  (\bibinfo{year}{2014}).
\newblock


\bibitem[\protect\citeauthoryear{Kuang, Ie, Wang, Wang, Siddiqi, Freund, and
  Leslie}{Kuang et~al\mbox{.}}{2005}]%
        {kuang2005profile}
\bibfield{author}{\bibinfo{person}{Rui Kuang}, \bibinfo{person}{Eugene Ie},
  \bibinfo{person}{Ke Wang}, \bibinfo{person}{Kai Wang},
  \bibinfo{person}{Mahira Siddiqi}, \bibinfo{person}{Yoav Freund}, {and}
  \bibinfo{person}{Christina Leslie}.} \bibinfo{year}{2005}\natexlab{}.
\newblock \showarticletitle{Profile-based string kernels for remote homology
  detection and motif extraction}.
\newblock \bibinfo{journal}{\emph{Journal of bioinformatics and computational
  biology}} \bibinfo{volume}{3}, \bibinfo{number}{03} (\bibinfo{year}{2005}),
  \bibinfo{pages}{527--550}.
\newblock


\bibitem[\protect\citeauthoryear{Kuksa, Huang, and Pavlovic}{Kuksa
  et~al\mbox{.}}{2009}]%
        {kuksa2009scalable}
\bibfield{author}{\bibinfo{person}{Pavel~P Kuksa}, \bibinfo{person}{Pai-Hsi
  Huang}, {and} \bibinfo{person}{Vladimir Pavlovic}.}
  \bibinfo{year}{2009}\natexlab{}.
\newblock \showarticletitle{Scalable algorithms for string kernels with inexact
  matching}. In \bibinfo{booktitle}{\emph{NIPS}}. \bibinfo{pages}{881--888}.
\newblock


\bibitem[\protect\citeauthoryear{Leslie, Eskin, and Noble}{Leslie
  et~al\mbox{.}}{2001}]%
        {leslie2001spectrum}
\bibfield{author}{\bibinfo{person}{Christina Leslie}, \bibinfo{person}{Eleazar
  Eskin}, {and} \bibinfo{person}{William~Stafford Noble}.}
  \bibinfo{year}{2001}\natexlab{}.
\newblock \showarticletitle{The spectrum kernel: A string kernel for SVM
  protein classification}.
\newblock In \bibinfo{booktitle}{\emph{Biocomputing 2002}}.
  \bibinfo{publisher}{World Scientific}, \bibinfo{pages}{564--575}.
\newblock


\bibitem[\protect\citeauthoryear{Leslie, Eskin, Weston, and Noble}{Leslie
  et~al\mbox{.}}{2003}]%
        {leslie2003mismatch}
\bibfield{author}{\bibinfo{person}{Christina Leslie}, \bibinfo{person}{Eleazar
  Eskin}, \bibinfo{person}{Jason Weston}, {and}
  \bibinfo{person}{William~Stafford Noble}.} \bibinfo{year}{2003}\natexlab{}.
\newblock \showarticletitle{Mismatch string kernels for SVM protein
  classification}. In \bibinfo{booktitle}{\emph{NIPS}}. Neural information
  processing systems foundation.
\newblock


\bibitem[\protect\citeauthoryear{Leslie and Kuang}{Leslie and Kuang}{2004}]%
        {leslie2004fast}
\bibfield{author}{\bibinfo{person}{Christina Leslie} {and} \bibinfo{person}{Rui
  Kuang}.} \bibinfo{year}{2004}\natexlab{}.
\newblock \showarticletitle{Fast string kernels using inexact matching for
  protein sequences}.
\newblock \bibinfo{journal}{\emph{Journal of Machine Learning Research}}
  \bibinfo{volume}{5}, \bibinfo{number}{Nov} (\bibinfo{year}{2004}),
  \bibinfo{pages}{1435--1455}.
\newblock


\bibitem[\protect\citeauthoryear{Leslie, Eskin, Cohen, Weston, and
  Noble}{Leslie et~al\mbox{.}}{2004}]%
        {leslie2004mismatch}
\bibfield{author}{\bibinfo{person}{Christina~S Leslie},
  \bibinfo{person}{Eleazar Eskin}, \bibinfo{person}{Adiel Cohen},
  \bibinfo{person}{Jason Weston}, {and} \bibinfo{person}{William~Stafford
  Noble}.} \bibinfo{year}{2004}\natexlab{}.
\newblock \showarticletitle{Mismatch string kernels for discriminative protein
  classification}.
\newblock \bibinfo{journal}{\emph{Bioinformatics}} \bibinfo{volume}{20},
  \bibinfo{number}{4} (\bibinfo{year}{2004}), \bibinfo{pages}{467--476}.
\newblock


\bibitem[\protect\citeauthoryear{Levenshtein}{Levenshtein}{1966}]%
        {levenshtein1966binary}
\bibfield{author}{\bibinfo{person}{Vladimir~I Levenshtein}.}
  \bibinfo{year}{1966}\natexlab{}.
\newblock \showarticletitle{Binary codes capable of correcting deletions,
  insertions, and reversals}. In \bibinfo{booktitle}{\emph{Soviet physics
  doklady}}, Vol.~\bibinfo{volume}{10}. \bibinfo{pages}{707--710}.
\newblock


\bibitem[\protect\citeauthoryear{Lodhi, Saunders, Shawe-Taylor, Cristianini,
  and Watkins}{Lodhi et~al\mbox{.}}{2002}]%
        {lodhi2002text}
\bibfield{author}{\bibinfo{person}{Huma Lodhi}, \bibinfo{person}{Craig
  Saunders}, \bibinfo{person}{John Shawe-Taylor}, \bibinfo{person}{Nello
  Cristianini}, {and} \bibinfo{person}{Chris Watkins}.}
  \bibinfo{year}{2002}\natexlab{}.
\newblock \showarticletitle{Text classification using string kernels}.
\newblock \bibinfo{journal}{\emph{Journal of Machine Learning Research}}
  \bibinfo{volume}{2}, \bibinfo{number}{Feb} (\bibinfo{year}{2002}),
  \bibinfo{pages}{419--444}.
\newblock


\bibitem[\protect\citeauthoryear{Loosli, Canu, and Ong}{Loosli
  et~al\mbox{.}}{2016}]%
        {loosli2016learning}
\bibfield{author}{\bibinfo{person}{Ga{\"e}lle Loosli},
  \bibinfo{person}{St{\'e}phane Canu}, {and} \bibinfo{person}{Cheng~Soon Ong}.}
  \bibinfo{year}{2016}\natexlab{}.
\newblock \showarticletitle{Learning SVM in Krein spaces}.
\newblock \bibinfo{journal}{\emph{IEEE transactions on pattern analysis and
  machine intelligence}} \bibinfo{volume}{38}, \bibinfo{number}{6}
  (\bibinfo{year}{2016}), \bibinfo{pages}{1204--1216}.
\newblock


\bibitem[\protect\citeauthoryear{Needleman and Wunsch}{Needleman and
  Wunsch}{1970}]%
        {needleman1970general}
\bibfield{author}{\bibinfo{person}{Saul~B Needleman} {and}
  \bibinfo{person}{Christian~D Wunsch}.} \bibinfo{year}{1970}\natexlab{}.
\newblock \showarticletitle{A general method applicable to the search for
  similarities in the amino acid sequence of two proteins}.
\newblock \bibinfo{journal}{\emph{Journal of molecular biology}}
  \bibinfo{volume}{48}, \bibinfo{number}{3} (\bibinfo{year}{1970}),
  \bibinfo{pages}{443--453}.
\newblock


\bibitem[\protect\citeauthoryear{Neuhaus and Bunke}{Neuhaus and Bunke}{2006}]%
        {neuhaus2006edit}
\bibfield{author}{\bibinfo{person}{Michel Neuhaus} {and} \bibinfo{person}{Horst
  Bunke}.} \bibinfo{year}{2006}\natexlab{}.
\newblock \showarticletitle{Edit distance-based kernel functions for structural
  pattern classification}.
\newblock \bibinfo{journal}{\emph{Pattern Recognition}} \bibinfo{volume}{39},
  \bibinfo{number}{10} (\bibinfo{year}{2006}), \bibinfo{pages}{1852--1863}.
\newblock


\bibitem[\protect\citeauthoryear{Rahimi and Recht}{Rahimi and Recht}{2008}]%
        {rahimi2008random}
\bibfield{author}{\bibinfo{person}{Ali Rahimi} {and} \bibinfo{person}{Benjamin
  Recht}.} \bibinfo{year}{2008}\natexlab{}.
\newblock \showarticletitle{Random features for large-scale kernel machines}.
  In \bibinfo{booktitle}{\emph{NIPS}}. \bibinfo{pages}{1177--1184}.
\newblock


\bibitem[\protect\citeauthoryear{Rudi and Rosasco}{Rudi and Rosasco}{2017}]%
        {rudi2017generalization}
\bibfield{author}{\bibinfo{person}{Alessandro Rudi} {and}
  \bibinfo{person}{Lorenzo Rosasco}.} \bibinfo{year}{2017}\natexlab{}.
\newblock \showarticletitle{Generalization properties of learning with random
  features}. In \bibinfo{booktitle}{\emph{NIPS}}. \bibinfo{pages}{3218--3228}.
\newblock


\bibitem[\protect\citeauthoryear{Saigo, Vert, Ueda, and Akutsu}{Saigo
  et~al\mbox{.}}{2004}]%
        {saigo2004protein}
\bibfield{author}{\bibinfo{person}{Hiroto Saigo},
  \bibinfo{person}{Jean-Philippe Vert}, \bibinfo{person}{Nobuhisa Ueda}, {and}
  \bibinfo{person}{Tatsuya Akutsu}.} \bibinfo{year}{2004}\natexlab{}.
\newblock \showarticletitle{Protein homology detection using string alignment
  kernels}.
\newblock \bibinfo{journal}{\emph{Bioinformatics}} \bibinfo{volume}{20},
  \bibinfo{number}{11} (\bibinfo{year}{2004}), \bibinfo{pages}{1682--1689}.
\newblock


\bibitem[\protect\citeauthoryear{Sch{\"o}lkopf, Tsuda, Vert, Istrail, Pevzner,
  Waterman, et~al\mbox{.}}{Sch{\"o}lkopf et~al\mbox{.}}{2004}]%
        {scholkopf2004kernel}
\bibfield{author}{\bibinfo{person}{Bernhard Sch{\"o}lkopf},
  \bibinfo{person}{Koji Tsuda}, \bibinfo{person}{Jean-Philippe Vert},
  \bibinfo{person}{Director~Sorin Istrail}, \bibinfo{person}{Pavel~A Pevzner},
  \bibinfo{person}{Michael~S Waterman}, {et~al\mbox{.}}}
  \bibinfo{year}{2004}\natexlab{}.
\newblock \bibinfo{booktitle}{\emph{Kernel methods in computational biology}}.
\newblock \bibinfo{publisher}{MIT press}.
\newblock


\bibitem[\protect\citeauthoryear{Si, Hsieh, and Dhillon}{Si
  et~al\mbox{.}}{2017}]%
        {si2017memory}
\bibfield{author}{\bibinfo{person}{Si Si}, \bibinfo{person}{Cho-Jui Hsieh},
  {and} \bibinfo{person}{Inderjit~S Dhillon}.} \bibinfo{year}{2017}\natexlab{}.
\newblock \showarticletitle{Memory efficient kernel approximation}.
\newblock \bibinfo{journal}{\emph{The Journal of Machine Learning Research}}
  \bibinfo{volume}{18}, \bibinfo{number}{1} (\bibinfo{year}{2017}),
  \bibinfo{pages}{682--713}.
\newblock


\bibitem[\protect\citeauthoryear{Smith and Waterman}{Smith and
  Waterman}{1981}]%
        {smith1981comparison}
\bibfield{author}{\bibinfo{person}{Temple~F Smith} {and}
  \bibinfo{person}{Michael~S Waterman}.} \bibinfo{year}{1981}\natexlab{}.
\newblock \showarticletitle{Comparison of biosequences}.
\newblock \bibinfo{journal}{\emph{Advances in applied mathematics}}
  \bibinfo{volume}{2}, \bibinfo{number}{4} (\bibinfo{year}{1981}),
  \bibinfo{pages}{482--489}.
\newblock


\bibitem[\protect\citeauthoryear{Smola and Vishwanathan}{Smola and
  Vishwanathan}{2003}]%
        {smola2003fast}
\bibfield{author}{\bibinfo{person}{Alex~J Smola} {and} \bibinfo{person}{SVN
  Vishwanathan}.} \bibinfo{year}{2003}\natexlab{}.
\newblock \showarticletitle{Fast kernels for string and tree matching}. In
  \bibinfo{booktitle}{\emph{NIPS}}. \bibinfo{pages}{585--592}.
\newblock


\bibitem[\protect\citeauthoryear{Srivastava, Hinton, Krizhevsky, Sutskever, and
  Salakhutdinov}{Srivastava et~al\mbox{.}}{2014}]%
        {DBLP:journals/jmlr/SrivastavaHKSS14}
\bibfield{author}{\bibinfo{person}{Nitish Srivastava},
  \bibinfo{person}{Geoffrey~E. Hinton}, \bibinfo{person}{Alex Krizhevsky},
  \bibinfo{person}{Ilya Sutskever}, {and} \bibinfo{person}{Ruslan
  Salakhutdinov}.} \bibinfo{year}{2014}\natexlab{}.
\newblock \showarticletitle{Dropout: a simple way to prevent neural networks
  from overfitting}.
\newblock \bibinfo{journal}{\emph{Journal of Machine Learning Research}}
  \bibinfo{volume}{15}, \bibinfo{number}{1} (\bibinfo{year}{2014}),
  \bibinfo{pages}{1929--1958}.
\newblock


\bibitem[\protect\citeauthoryear{Waterman, Joyce, and Eggert}{Waterman
  et~al\mbox{.}}{1991}]%
        {waterman1991computer}
\bibfield{author}{\bibinfo{person}{MICHAEL~S Waterman}, \bibinfo{person}{JANA
  Joyce}, {and} \bibinfo{person}{Mark Eggert}.}
  \bibinfo{year}{1991}\natexlab{}.
\newblock \showarticletitle{Computer alignment of sequences}.
\newblock \bibinfo{journal}{\emph{Phylogenetic analysis of DNA sequences}}
  (\bibinfo{year}{1991}), \bibinfo{pages}{59--72}.
\newblock


\bibitem[\protect\citeauthoryear{Watkins}{Watkins}{1999}]%
        {watkins1999dynamic}
\bibfield{author}{\bibinfo{person}{Chris Watkins}.}
  \bibinfo{year}{1999}\natexlab{}.
\newblock \showarticletitle{Dynamic alignment kernels}.
\newblock \bibinfo{journal}{\emph{NIPS}} (\bibinfo{year}{1999}),
  \bibinfo{pages}{39--50}.
\newblock


\bibitem[\protect\citeauthoryear{Weston, Sch{\"o}lkopf, Eskin, Leslie, and
  Noble}{Weston et~al\mbox{.}}{2003}]%
        {weston2003dealing}
\bibfield{author}{\bibinfo{person}{Jason Weston}, \bibinfo{person}{Bernhard
  Sch{\"o}lkopf}, \bibinfo{person}{Eleazar Eskin}, \bibinfo{person}{Christina
  Leslie}, {and} \bibinfo{person}{William~Stafford Noble}.}
  \bibinfo{year}{2003}\natexlab{}.
\newblock \showarticletitle{Dealing with large diagonals in kernel matrices}.
\newblock \bibinfo{journal}{\emph{Annals of the Institute of Statistical
  Mathematics}} \bibinfo{volume}{55}, \bibinfo{number}{2}
  (\bibinfo{year}{2003}), \bibinfo{pages}{391--408}.
\newblock


\bibitem[\protect\citeauthoryear{Williams and Seeger}{Williams and
  Seeger}{2001}]%
        {williams2001using}
\bibfield{author}{\bibinfo{person}{Christopher~KI Williams} {and}
  \bibinfo{person}{Matthias Seeger}.} \bibinfo{year}{2001}\natexlab{}.
\newblock \showarticletitle{Using the Nystr{\"o}m method to speed up kernel
  machines}. In \bibinfo{booktitle}{\emph{NIPS}}. \bibinfo{pages}{682--688}.
\newblock


\bibitem[\protect\citeauthoryear{Wu, Chen, Yen, Xu, Xia, and Aggarwal}{Wu
  et~al\mbox{.}}{2018a}]%
        {wu2018scalable}
\bibfield{author}{\bibinfo{person}{Lingfei Wu}, \bibinfo{person}{Pin-Yu Chen},
  \bibinfo{person}{Ian En-Hsu Yen}, \bibinfo{person}{Fangli Xu},
  \bibinfo{person}{Yinglong Xia}, {and} \bibinfo{person}{Charu Aggarwal}.}
  \bibinfo{year}{2018}\natexlab{a}.
\newblock \showarticletitle{Scalable spectral clustering using random binning
  features}. In \bibinfo{booktitle}{\emph{KDD}}. ACM,
  \bibinfo{pages}{2506--2515}.
\newblock


\bibitem[\protect\citeauthoryear{Wu, Yen, Chen, and Yan}{Wu
  et~al\mbox{.}}{2016}]%
        {wu2016revisiting}
\bibfield{author}{\bibinfo{person}{Lingfei Wu}, \bibinfo{person}{Ian~EH Yen},
  \bibinfo{person}{Jie Chen}, {and} \bibinfo{person}{Rui Yan}.}
  \bibinfo{year}{2016}\natexlab{}.
\newblock \showarticletitle{Revisiting random binning features: Fast
  convergence and strong parallelizability}. In
  \bibinfo{booktitle}{\emph{KDD}}. ACM, \bibinfo{pages}{1265--1274}.
\newblock


\bibitem[\protect\citeauthoryear{Wu, Yen, Xu, Xu, Balakrishnan, Chen,
  Ravikumar, and Witbrock}{Wu et~al\mbox{.}}{2018c}]%
        {wu2018word}
\bibfield{author}{\bibinfo{person}{Lingfei Wu}, \bibinfo{person}{Ian~EH Yen},
  \bibinfo{person}{Kun Xu}, \bibinfo{person}{Fangli Xu},
  \bibinfo{person}{Avinash Balakrishnan}, \bibinfo{person}{Pin-Yu Chen},
  \bibinfo{person}{Pradeep Ravikumar}, {and} \bibinfo{person}{Michael~J
  Witbrock}.} \bibinfo{year}{2018}\natexlab{c}.
\newblock \showarticletitle{Word Mover's Embedding: From Word2Vec to Document
  Embedding}.
\newblock \bibinfo{journal}{\emph{EMNLP}} (\bibinfo{year}{2018}),
  \bibinfo{pages}{4524–4534}.
\newblock


\bibitem[\protect\citeauthoryear{Wu, Yen, Xu, Ravikuma, and Witbrock}{Wu
  et~al\mbox{.}}{2018b}]%
        {wu2018d2ke}
\bibfield{author}{\bibinfo{person}{Lingfei Wu}, \bibinfo{person}{Ian En-Hsu
  Yen}, \bibinfo{person}{Fangli Xu}, \bibinfo{person}{Pradeep Ravikuma}, {and}
  \bibinfo{person}{Michael Witbrock}.} \bibinfo{year}{2018}\natexlab{b}.
\newblock \showarticletitle{D2KE: From Distance to Kernel and Embedding}.
\newblock \bibinfo{journal}{\emph{arXiv:1802.04956}} (\bibinfo{year}{2018}).
\newblock


\bibitem[\protect\citeauthoryear{Wu, Yen, Yi, Xu, Lei, and Witbrock}{Wu
  et~al\mbox{.}}{2018d}]%
        {wu2018random}
\bibfield{author}{\bibinfo{person}{Lingfei Wu}, \bibinfo{person}{Ian En-Hsu
  Yen}, \bibinfo{person}{Jinfeng Yi}, \bibinfo{person}{Fangli Xu},
  \bibinfo{person}{Qi Lei}, {and} \bibinfo{person}{Michael Witbrock}.}
  \bibinfo{year}{2018}\natexlab{d}.
\newblock \showarticletitle{Random Warping Series: A Random Features Method for
  Time-Series Embedding}. In \bibinfo{booktitle}{\emph{AIStats}}.
  \bibinfo{pages}{793--802}.
\newblock


\bibitem[\protect\citeauthoryear{Xing, Pei, and Keogh}{Xing
  et~al\mbox{.}}{2010}]%
        {xing2010brief}
\bibfield{author}{\bibinfo{person}{Zhengzheng Xing}, \bibinfo{person}{Jian
  Pei}, {and} \bibinfo{person}{Eamonn Keogh}.} \bibinfo{year}{2010}\natexlab{}.
\newblock \showarticletitle{A brief survey on sequence classification}.
\newblock \bibinfo{journal}{\emph{ACM Sigkdd Explorations Newsletter}}
  \bibinfo{volume}{12}, \bibinfo{number}{1} (\bibinfo{year}{2010}),
  \bibinfo{pages}{40--48}.
\newblock


\bibitem[\protect\citeauthoryear{Yujian and Bo}{Yujian and Bo}{2007}]%
        {yujian2007normalized}
\bibfield{author}{\bibinfo{person}{Li Yujian} {and} \bibinfo{person}{Liu Bo}.}
  \bibinfo{year}{2007}\natexlab{}.
\newblock \showarticletitle{A normalized Levenshtein distance metric}.
\newblock \bibinfo{journal}{\emph{IEEE transactions on pattern analysis and
  machine intelligence}} \bibinfo{volume}{29}, \bibinfo{number}{6}
  (\bibinfo{year}{2007}), \bibinfo{pages}{1091--1095}.
\newblock


\end{thebibliography}
%

\end{document}